\theoremstyle{plain}
\newtheorem{theorem}{Theorem}
\newtheorem{lemma}[theorem]{Lemma}
\theoremstyle{definition}
\newenvironment{narrow}[2]{%
  \begin{list}{}{%
      \setlength{\topsep}{0pt}%
      \setlength{\leftmargin}{#1}%
      \setlength{\rightmargin}{#2}%
      \setlength{\listparindent}{\parindent}%
      \setlength{\itemindent}{\parindent}%
      \setlength{\parsep}{\parskip}}%
    \item[]}{\end{list}
}
\begin{document}

\setlength{\parskip}{0.4\baselineskip plus 2pt}
\setlength{\parindent}{0pt}

\def\spacingset#1{\renewcommand{\baselinestretch}%
{#1}\small\normalsize} \spacingset{1}

\title{\bf The Geometry of Machine Learning Models\thanks{Research reported in this publication was supported by grant number INV-048956 from the Gates Foundation.}}
\author{Pawel Gajer$^{1}$\thanks{Corresponding author: pgajer@gmail.com}\; and Jacques Ravel$^{1}$}
\date{%
\small
$^1$Center for Advanced Microbiome Research and Innovation (CAMRI),\\
Institute for Genome Sciences, and Department of Microbiology and Immunology,\\
University of Maryland School of Medicine
}
\maketitle

\begin{abstract}
  This paper presents a mathematical framework for analyzing machine learning
  models through the geometry of their induced partitions. By representing
  partitions as Riemannian simplicial complexes, we capture not only adjacency
  relationships but also geometric properties including cell volumes, volumes of
  faces where cells meet, and dihedral angles between adjacent cells. For neural
  networks, we introduce a differential forms approach that tracks geometric
  structure through layers via pullback operations, making computations
  tractable by focusing on data-containing cells. The framework enables
  geometric regularization that directly penalizes problematic spatial
  configurations and provides new tools for model refinement through extended
  Laplacians and simplicial splines. We also explore how data distribution
  induces effective geometric curvature in model partitions, developing discrete
  curvature measures for vertices that quantify local geometric complexity and
  statistical Ricci curvature for edges that captures pairwise relationships
  between cells. While focused on mathematical foundations, this geometric
  perspective offers new approaches to model interpretation, regularization, and
  diagnostic tools for understanding learning dynamics.
\end{abstract}

\noindent
{\it Keywords: Geometric deep learning, Neural network geometry, Discrete Riemannian geometry, Model interpretability, Geometric regularization, Discrete differential geometry}

\section{Introduction}
Modern machine learning models achieve remarkable empirical success, yet their
internal mechanisms remain largely opaque. While theoretical foundations have
evolved from early function approximation results
\cite{cybenko1989approximation, hornik1989multilayer} to include geometric
perspectives such as neural tangent kernels \cite{jacot2018neural}, loss
landscape analysis \cite{li2018visualizing}, and geometric deep learning
\cite{bronstein2017geometric}, these approaches have not systematically
characterized how models partition their input spaces and the geometric
properties of these partitions. Similarly, while interpretability methods have
progressed beyond pure input-output analysis to include feature visualization
\cite{olah2017feature} and representation probing \cite{alain2017understanding,
  belinkov2017analysis}, they typically do not examine the spatial organization
of decision regions—how cells meet, their relative volumes, and the angles at
which boundaries intersect. Performance metrics provide precise measurements of
accuracy but incomplete explanations for why certain architectural and training
choices lead to better generalization \cite{zhang2017understanding,
  neyshabur2017exploring}. This opacity becomes particularly problematic when
deploying models in critical applications where understanding failure modes is
paramount \cite{rudin2019stop, doshi2017towards}.

This paper develops a framework that analyzes machine learning models through
their inherent geometric structures. Many successful methods, including decision
trees \cite{breiman1984classification}, multivariate adaptive regression splines
\cite{friedman1991multivariate}, random forests \cite{breiman2001random},
gradient boosting \cite{friedman2001greedy}, and neural networks
\cite{lecun2015deep, goodfellow2016deep}, fundamentally operate by partitioning
their input space into regions. These partitions possess geometric properties
including cell volumes, volumes of faces where cells meet, and dihedral angles
between adjacent cells that provide a new lens for understanding model behavior.

The mathematical framework we develop represents machine learning models as
simplicial complexes, establishing a geometric interpretation that applies
across diverse model classes. Simplicial complexes provide a natural way to
encode not just which regions are adjacent (as in a graph), but how multiple
regions meet simultaneously, information that graphs cannot capture. When four
decision tree cells meet at a point, or when three neural network regions share
a common boundary, these higher-order relationships contain geometric
information essential for understanding model behavior.

Through the nerve construction \cite{borsuk1948theorem}, any partition
transforms into a simplicial complex where vertices represent regions, edges
capture adjacency, and higher-dimensional simplices encode multi-way
intersections. Decision trees create axis-aligned partitions whose simplicial
complexes have a characteristic structure: each point where $2^n$ cells meet (in
$n$ dimensions) corresponds to a $(2^n-1)$-dimensional simplex in the nerve
complex. Ensemble methods generate overlapping partitions that must be carefully
reconciled, with each base model contributing its own partition structure. The
reconciled partition then induces a single simplicial complex that captures the
ensemble's combined geometry. For neural networks, we analyze their structure
through refined partitions: as data flows through layers, we track how cells
subdivide and map between layers, constructing functionally enriched simplicial
complexes that capture both the combinatorial adjacency relationships and the
affine transformations within each region.

The framework equips these simplicial complexes with a Riemannian structure
through inner products on chain spaces that encode the geometric properties of
partitions: cell volumes, face volumes, and dihedral angles in a mathematically
precise way. For neural networks specifically, we employ a differential forms
approach where volumes and boundary measures transform systematically under
pullback operations through the layers, yielding explicit formulas that reduce
geometric computations from potentially exponentially many cell intersections to
calculations on only cells that actually contain training data. This provides a
geometric characterization of deep networks that goes beyond simply noting their
piecewise linear structure.

Our geometric framework complements but differs from existing approaches like
persistent homology \cite{carlsson2009topology, edelsbrunner2008persistent},
which studies topological features of data, and manifold learning
\cite{tenenbaum2000global, roweis2000nonlinear}, which seeks low-dimensional
representations. Rather than analyzing data geometry, we study the geometry of
model-induced partitions, revealing how learning algorithms organize and
navigate feature space.

The geometric framework enables regularization strategies that directly target
spatial properties associated with good generalization. While traditional
regularization operates indirectly by limiting tree depth or penalizing weight
magnitudes \cite{tibshirani1996regression, zou2005regularization}, geometric
regularization directly penalizes problematic spatial configurations: extreme
volume disparities between regions, sharp angles at boundaries, or excessive
fragmentation.

Beyond regularization, the geometric framework enables new approaches to model
refinement. We introduce extended Laplacians that incorporate higher-order
simplicial information, moving beyond traditional graph-based smoothing to
respect the full geometric structure of how cells meet in space. We also develop
simplicial splines that extend classical spline regression to high-dimensional
data by leveraging the Riemannian simplicial complex structure, avoiding the
limitations of tensor product approaches while respecting the partition
geometry. These tools offer new ways to smooth and refine the piecewise constant
predictions of partition models while respecting their inherent geometric
structure.

This geometric perspective also suggests new directions for understanding
machine learning. We introduce density-weighted metrics on the simplicial
complexes that incorporate the distribution of training data, making paths
through high-density regions effectively shorter than those through sparse
regions. This enriched geometric structure captures how data concentration
affects model behavior and provides new tools for analyzing generalization.

A key insight of our framework is that real-world data, despite living in
high-dimensional Euclidean space, is typically sparse and highly structured.
This sparsity and non-uniform distribution effectively creates geometric
curvature. When data concentrates in specific regions or along lower-dimensional
structures while leaving vast regions empty, machine learning models must adapt
their partitions to this inhomogeneous structure. The resulting geometry
exhibits curvature-like properties that our framework can measure and quantify.

This observation leads to concepts that extend classical differential geometry
to the real-world data setting. We develop statistical curvature measures that
quantify local geometric complexity through multiple approaches: ball-growth
curvature that measures how density-weighted neighborhoods expand, geodesic
spreading that tracks path divergence, and functional curvatures that capture
how model predictions vary across the partition. Building on these vertex-based
measures, we introduce statistical Ricci curvature for edges that combines
geometric, density, and functional components. This edge-based curvature
captures relational properties between cells that vertex measures alone cannot
detect, enabling a more complete geometric characterization of model behavior.

The curvature framework provides practical tools for model analysis and
improvement. We develop geometric regularization that penalizes extreme
curvatures at both vertices and edges, encouraging models with balanced
geometric structure. For monitoring learning dynamics, we track how curvature
distributions evolve during training, potentially revealing transitions from
learning robust patterns to overfitting. These geometric diagnostics complement
traditional loss-based metrics, offering new insights into model behavior.

While this paper focuses on developing the mathematical foundations, the
geometric framework opens several avenues for practical impact. The geometric
regularization methods could lead to models with better interpretability and
more predictable failure modes. The ability to quantify geometric properties
like cell volumes and boundary angles provides new diagnostic tools for
understanding when and why models overfit. For practitioners, the framework
offers a principled approach to comparing architectures based on their induced
geometric structures rather than solely on performance metrics. Experimental
validation of these theoretical insights remains an important direction for
future work.

The rest of the paper is organized as follows. Section 2 introduces the
geometric framework through partition models, where the connection between model
structure and induced geometry appears most directly. We begin with decision
trees and their natural simplicial complexes, extend to ensemble methods that
create overlapping partitions, and develop the theory of geometric
regularization. Section 2.4 explores the incorporation of statistical structure
through density-weighted metrics and the emergence of curvature-like phenomena,
developing both vertex-based and edge-based curvature measures along with their
applications to regularization and model monitoring. Section 3 extends the
framework to neural networks, showing how layer-wise computation induces
sequences of simplicial complexes with rich geometric structure.

\section*{2. Partition Models}

Partition-based machine learning models, from decision trees to their powerful
ensemble extensions, have become fundamental tools in modern data analysis.
These models succeed by dividing the feature space into regions, each associated
with a local prediction. Yet despite their widespread use and empirical success,
we have limited tools for understanding and improving the geometric structures
these partitions create.

Consider the spatial patterns that emerge when a decision tree recursively
splits a feature space. Some regions may become extremely small to accommodate
individual training points, while others span large volumes. Boundaries between
regions follow axis-aligned or linear cuts that may create jagged, indirect
paths between naturally connected areas. These geometric properties, including
cell volumes, boundary configurations, and angles at which regions meet, contain
rich information about model behavior that current methods largely ignore.

What if we could systematically analyze these geometric structures? Rather than
treating partitions merely as computational byproducts, we can view them as
geometric objects amenable to mathematical analysis. This perspective opens new
avenues: detecting when a model creates unnecessarily complex spatial
structures, identifying opportunities for smoothing without sacrificing
predictive power, and developing principled approaches to combine multiple
partitions in ensembles.

This section develops a geometric framework for understanding partition-based
models through the lens of Riemannian simplicial complexes. We begin by
reviewing how various machine learning models create partitions and their
current regularization strategies (Section 2.1). We then introduce a geometric
perspective that captures not just which cells are adjacent but how they meet in
space (Section 2.2). This framework enables novel analysis tools and
regularization approaches that directly leverage spatial structure. Finally, we
demonstrate how this geometric understanding leads to principled model
refinement techniques using simplicial splines and extended Laplacians (Section
2.3).

\subsection*{2.1 Partition Models in Machine Learning}

How do machine learning models make sense of complex, high-dimensional data? One
of the most successful strategies involves dividing the feature space into
manageable regions, with each region receiving its own tailored prediction. A
decision tree examining house prices might separate urban from suburban
properties, then further divide based on square footage, creating a hierarchy of
increasingly specific regions. Each leaf of the tree corresponds to a subset of
the feature space where houses share similar characteristics and, presumably,
similar prices.

This divide-and-conquer approach appears throughout machine learning. Random
forests combine hundreds of such partitions to achieve remarkable predictive
power. Multivariate adaptive regression splines create smooth transitions
between regions while maintaining the essential piecewise structure. Support
vector machines with local models adapt their decision boundaries to regional
data characteristics. What unites these diverse methods is their fundamental
operation: partitioning the feature space and assigning local models to each
piece.

Yet despite their empirical success, we lack a unified framework for
understanding these partition-based models. What geometric principles govern
effective partitions? When a model creates hundreds of tiny regions to fit
training data, how can we distinguish necessary complexity from overfitting? How
should adjacent regions relate to each other geometrically? These questions
motivate a systematic study of partition models through the lens of geometry.

To establish this geometric perspective, we first formalize the notion of a
partition model. Given observations $x_1, x_2, \ldots, x_N \in \mathbb{R}^n$ and
responses $y_1, y_2, \ldots, y_N \in \mathbb{R}$, a partition model constructs a
decomposition of the feature space into regions, each associated with a local
prediction function. Formally, the model defines a partition
$\mathcal{P} = \{C_\alpha\}_{\alpha}$ of a domain
$\mathcal{D} \subset \mathbb{R}^n$, where $\mathcal{D}$ is typically chosen as
the smallest axis-aligned hyperrectangle containing all observed data points.
The cells $C_\alpha$ satisfy the covering and disjointness properties:
$$
\bigcup_{\alpha} C_\alpha = \mathcal{D}, \quad C^\circ_\alpha \cap C^\circ_{\alpha'} = \emptyset \text{ for } \alpha \neq \alpha',
$$
where $C^\circ$ denotes the interior of cell $C$. Let $I$ denote the set of
indices $\alpha$ such that cell $C_\alpha$ contains at least one observation
from $\{x_1, x_2, \ldots, x_N\}$. While the partition may generate numerous
cells covering the entire domain $\mathcal{D}$, the model considers only those
cells indexed by $I$. This distinction is crucial: a decision tree might
partition $\mathcal{D}$ into thousands of regions, but only the subset
containing training data influences the model's predictions and parameters. Each
cell $C_\alpha$ with $\alpha \in I$ is associated with a local prediction
function $f_\alpha: C_\alpha \to \mathbb{R}$. These functions are determined by
minimizing an objective function that typically includes a data fidelity term
and may incorporate regularization:
$$
\mathcal{L} = \sum_{\alpha \in I} \left( \sum_{j} \|f_\alpha(x_{\alpha,j}) - y_{\alpha,j}\|^2 + R_\alpha \right),
$$
where $x_{\alpha,j}$ are the data points falling within $C_\alpha$,
$y_{\alpha,j}$ are their associated responses, and $R_\alpha$ represents an
optional regularization term. When present, $R_\alpha$ may reflect the
complexity of $f_\alpha$, geometric properties of the cell, or structural
constraints of the model class. Many partition models, particularly those with
inherent structural constraints such as constant predictions within cells, may
not require explicit regularization terms, setting $R_\alpha = 0$.

This formal framework encompasses a rich variety of machine learning methods,
each making different choices about how to create partitions and what functions
to use within each cell. Understanding these choices and their geometric
implications forms the foundation for our subsequent analysis.

\subsubsection*{2.1.1 Common Partition Models}

How do different machine learning algorithms create their partitions, and what
geometric structures result from their choices? When we examine a trained
decision tree, we see axis-aligned boxes dividing the feature space. A nearest
neighbor classifier implicitly creates Voronoi cells around training points. A
multivariate adaptive regression spline maintains smooth transitions across
piecewise boundaries. These diverse geometric structures arise from
fundamentally different partition strategies, yet we lack a unified way to
compare and analyze them.

Understanding these geometric differences matters for practical model selection.
What determines the magnitude of improvement when moving from a single decision
tree to a random forest? How do the axis-aligned partitions of trees compare to
the radial partitions of RBF networks in capturing different data patterns? By
examining the partition structures created by various models, we can begin to
understand their inductive biases and predict when each approach will excel.

The most familiar examples of partition models are tree-based methods
\cite{breiman1984classification,hastie2009elements}. In regression and
classification trees, the domain $\mathcal{D}$ is recursively divided through
axis-aligned splits of the form $x^{(j)} \leq t$. Each split introduces a
hyperplane $x^{(j)} = t$ that partitions the current region into two subregions,
ultimately producing hyperrectangular cells. Within each cell $C_\alpha$, the
prediction function is typically constant: $f_\alpha(x) = c_\alpha$. For
regression, $c_\alpha$ is the average of response values in the cell; for
classification, it is often the modal class or a probability vector over
classes.

Multivariate Adaptive Regression Splines (MARS) \cite{friedman1991multivariate}
construct axis-aligned hyperrectangular partitions through products of hinge
functions. A MARS model takes the form
$f(x) = \beta_0 + \sum_{m=1}^M \beta_m B_m(x)$, where each basis function $B_m$
is a product of terms $(x^{(j)} - t)_+ = \max(0, x^{(j)} - t)$ or
$(t - x^{(j)})_+$. This construction implicitly partitions the domain into
cells where $f$ has a fixed polynomial form. Unlike tree methods, MARS
enforces continuity across cell boundaries: for adjacent cells $C_\alpha$ and
$C_\beta$, we have $\lim_{x \to x'} f(x) = \lim_{x \to x'} f(x)$ for
$x' \in \partial C_\alpha \cap \partial C_\beta$. This combination of
piecewise polynomial structure and continuity constraints distinguishes MARS
as a smooth approximation method despite its partition-based foundation.

Oblique decision trees \cite{murthy1994system} generalize the partition
structure beyond axis-aligned boundaries by allowing splits based on linear
combinations of features. An internal node performs a test of the form
$\beta^T x \leq t$, where $\beta \in \mathbb{R}^n$ is a weight vector not
restricted to coordinate directions. This leads to partitions by arbitrarily
oriented hyperplanes, resulting in general convex polyhedral cells rather than
hyperrectangles. The prediction functions within each cell are typically
constant, though more expressive variants with piecewise linear models have been
developed \cite{bennett1996linear}.

Voronoi tessellations provide another approach to creating convex polyhedral
partitions, based on proximity to prototype points rather than sequential
splitting. Given prototypes $\mu_1, \ldots, \mu_K \in \mathcal{D}$, the Voronoi
cell $C_k$ is defined as:
$$
C_k = \left\{ x \in \mathcal{D} : \|x - \mu_k\| \leq \|x - \mu_j\| \text{ for all } j \neq k \right\}.
$$
Under the Euclidean norm, this yields convex polygonal or polyhedral cells whose
boundaries are perpendicular bisectors between prototype pairs. Local prediction
models within each cell range from constants (as in nearest neighbor methods
\cite{cover1967nearest}) to linear or nonlinear functions (as in local
regression \cite{cleveland1988locally}). Vector quantization
\cite{gray1998quantization} and learning vector quantization
\cite{kohonen1995learning} adaptively update the prototypes to optimize the
partition for prediction accuracy.

Support vector machines with local models \cite{bottou1992local} implement
partitioning through a two-stage process: first subdividing the feature space
using clustering or other heuristics, then fitting an SVM within each region.
These partitions can employ hard boundaries with distinct regions or soft
transitions with overlapping influence functions, allowing adaptation to
heterogeneous data distributions where global models fail to capture local
patterns.

Recursive partitioning methods \cite{zhang1998recursive} provide a unifying
framework that encompasses many of the approaches above. These methods divide
the feature space recursively using flexible splitting criteria: single
variables (axis-aligned trees), linear combinations (oblique trees), or more
complex functions. The partition refinement continues until a stopping criterion
is met, such as minimum cell occupancy or lack of improvement in prediction
accuracy. This framework highlights how different choices of splitting functions
and local models lead to the variety of partition-based methods in practice.

While recursive partitioning methods typically proceed top-down from a single
cell containing all data, bottom-up approaches offer an alternative construction
principle. These methods begin with a fine-grained partition and iteratively
merge adjacent cells based on optimality criteria. For instance, one could
initialize with a Voronoi tessellation using all data points as prototypes, then
repeatedly merge adjacent cells when the merged predictor achieves lower error
than the individual cells. This approach generalizes hierarchical clustering by
replacing Ward's variance-minimization criterion with a prediction-based merging
criterion that directly optimizes model performance. Similar principles appear
in region growing algorithms from image segmentation \cite{adams1994seeded} and
agglomerative information bottleneck methods \cite{slonim1999agglomerative},
though adapted here for supervised learning with explicit local prediction
functions. Such agglomerative methods naturally incorporate local geometry
through the adjacency structure and can discover partitions that balance model
complexity with predictive accuracy. The choice between top-down splitting and
bottom-up merging reflects different inductive biases: splitting emphasizes
global structure and interpretability, while merging captures local similarities
and can adapt to varying data density.

The partition models described above exhibit a progression in geometric and
functional complexity. Axis-aligned methods (trees, MARS) create
hyperrectangular cells that align with coordinate axes, facilitating
interpretation but potentially requiring many cells for diagonal boundaries.
General linear methods (oblique trees, Voronoi tessellations) allow arbitrary
convex polyhedral cells, better adapting to data geometry at the cost of
interpretability. Local model complexity ranges from constants through
polynomials to general nonlinear functions, while continuity properties vary
from the discontinuous predictions of trees to the smooth transitions of MARS
and kernel-based methods. These structural differences directly influence the
structure of the induced simplicial complexes, as explored in subsequent
sections.

\subsubsection*{2.1.2 Regularization in Partition Models}

Why do partition models with excellent training accuracy often fail
catastrophically on new data? A decision tree might create thousands of tiny
cells to perfectly classify training points, yet misclassify simple test cases.
Current regularization approaches limit tree depth or penalize the number of
leaves, but these constraints address symptoms rather than causes. The real
problem lies deeper: pathological geometric structures that we cannot detect or
penalize with existing methods.

Consider a partition where one cell occupies 99\% of the feature space while
hundreds of tiny cells fragment the remaining 1\%. Traditional regularization
counts each cell equally, missing this extreme imbalance. Or imagine cells
meeting at sharp, knife-like angles that create unstable decision boundaries.
Standard penalties on model complexity fail to distinguish this problematic
geometry from a well-balanced partition with the same number of cells. These
limitations suggest we need regularization approaches that directly target
geometric properties.

Regularization in partition models serves to control model complexity and
prevent overfitting, yet the form of the penalty $R_\alpha$ varies substantially
across model classes. In tree-based methods, regularization typically constrains
tree complexity through structural constraints rather than explicit penalty
terms. Cost-complexity pruning adds a penalty term $\alpha |T|$ to the empirical
loss, where $|T|$ denotes the number of terminal nodes. This can be interpreted
as assigning a uniform regularization cost $R_\alpha = \alpha$ to each leaf
cell, discouraging excessively deep trees \cite{breiman1984classification}.

For MARS models, regularization penalizes both the number of basis functions and
their interaction complexity. The penalty typically takes the form
$$
R_{\text{MARS}} = \lambda_1 M + \lambda_2 \sum_{m=1}^M d_m,
$$
where $M$ is the number of basis functions and $d_m$ denotes the number of
distinct variables in the $m$-th basis function, reflecting its interaction
order \cite{friedman1991multivariate,hastie2009elements}. This formulation
discourages both model size and high-order interactions, promoting
interpretability while controlling variance.

Similar limitations appear across other partition methods. Voronoi-based
approaches typically regularize only the number or positions of prototype
points, without considering the geometry of the resulting cells
\cite{kohonen1995learning}. Oblique trees typically focus on finding optimal
linear combinations for splits without explicit geometric regularization of the
resulting partition structure \cite{murthy1994system,heath1993induction}.
Support vector machines with local models often inherit regularization from
their constituent SVMs, focusing on margin maximization within cells rather than
the quality of the partition structure itself \cite{bottou1992local}.

Classical regularization approaches thus target either the number of cells or
the smoothness of functions within them, but rarely leverage the geometric
structure of the partition. This limitation becomes apparent when considering
pathological partitions: a model might create cells with extreme volume
disparities, indicating potential overfitting to sparse data regions, or
generate boundaries meeting at sharp angles, suggesting non-smooth decision
surfaces that may generalize poorly. A partition with large
surface-area-to-volume ratios might indicate excessive complexity, fragmenting
the feature space unnecessarily.

These observations motivate the development of geometric regularization
approaches that directly leverage the spatial properties of partitions,
including natural geometric quantities such as cell volumes, boundary areas, and
angles between adjacent cells.

\subsection*{2.2 Geometry of Partition Models}

When two cells in a partition meet along a boundary, what geometric information
does this relationship contain? Consider two adjacent regions created by a
decision tree. We know they share a boundary, but we might also ask: How large
is this shared boundary? Does a long boundary indicate a gradual transition
while a small one suggests an abrupt change? If we compare the volumes of these
cells, does a large disparity signal that one region is overly specialized?
These questions reveal that partitions contain geometric information beyond mere
adjacency.

The situation becomes richer when three or more cells meet at a point. The
angles at which they meet encode information about the local configuration that
pairwise relationships cannot capture. These geometric properties, such as
boundary areas, cell volumes, and meeting angles, provide insights into model
behavior that current regularization approaches overlook.

This section develops a mathematical framework to capture and exploit these
geometric structures. We begin by showing how partitions naturally give rise to
simplicial complexes that encode adjacency relationships (Section 2.2.1). We
then enrich these structures with geometric information through Riemannian
metrics (Section 2.2.2). This framework extends naturally to ensemble methods
and reveals gradient boosting as a geometric flow (Sections 2.2.3-2.2.4).
Finally, we demonstrate how these geometric insights enable new regularization
strategies (Section 2.2.5).

\subsubsection*{2.2.1 From Partitions to Simplicial Complexes}

How can we mathematically capture the relationships between cells in a partition
beyond simple adjacency? When three or more cells meet at a point, they form a
configuration that pairwise adjacency cannot describe. In three dimensions, four
cells might meet at a vertex, creating a tetrahedral arrangement. The angles and
orientations of this meeting contain geometric information essential for
understanding the partition's structure, yet standard graph representations
discard these higher-order relationships.

This limitation affects the analysis of partition models and may impact the
predictive performance of the partition model. A graph tells us which cells are
neighbors but not how they meet in space. Are the cells arranged symmetrically
around their common vertex, or do they form sharp, unstable configurations? Do
many cells crowd together in one region while others stand isolated? These
geometric patterns may impact model behavior and generalization, but we need
richer mathematical structures to detect and quantify them.

To understand how geometric relationships in partitions can be captured
mathematically, we begin with a familiar example: the relationship between
Voronoi diagrams and Delaunay triangulations. Given points
$X = \{x_1, x_2, \ldots, x_k\}$ in $\mathcal{D}$, the Voronoi tessellation
\cite{voronoi1908nouvelles,aurenhammer1991voronoi} partitions $\mathcal{D}$ into
cells, where each cell $C_i$ consists of all points closer to $x_i$ than to any
other point in $X$. This partition has a natural adjacency structure: two cells
$C_i$ and $C_j$ are adjacent when they share a common boundary of dimension
$n-1$. The Delaunay triangulation \cite{delaunay1934sphere} captures this
adjacency by connecting points $x_i$ and $x_j$ with an edge whenever their
corresponding Voronoi cells are adjacent.

This construction extends beyond edges. When three Voronoi cells in
$\mathbb{R}^2$ meet at a common vertex, their generating points form a triangle
in the Delaunay triangulation. More generally, whenever $d+1$ Voronoi cells
share a common face of dimension $n-d$, their generating points form a
$d$-simplex in the Delaunay triangulation
\cite{okabe2000spatial,preparata1985computational}. The result is a simplicial
complex that encodes the full combinatorial structure of the Voronoi partition.

The Delaunay triangulation exhibits a crucial property: if we have a triangle
with vertices $\{x_i, x_j, x_k\}$, then we also have all three edges
$\{x_i, x_j\}$, $\{x_i, x_k\}$, and $\{x_j, x_k\}$, as well as all three
vertices. Similarly, a tetrahedron includes all its triangular faces, edges, and
vertices. This closure property characterizes the structure of a simplicial
complex. Formally, a simplicial complex on a vertex set $V$ is a collection
$K = \bigcup_{p=0}^d K_p$, where each $K_p$ consists of $(p+1)$-element subsets
of $V$ called $p$-simplices \cite{munkres1984elements,hatcher2002algebraic}. The
defining property is closure under taking subsets: if $\sigma \in K$ and
$\tau \subseteq \sigma$, then $\tau \in K$. This ensures that the boundary of
every simplex is contained in the complex, guaranteeing that simplicial
complexes have no self-intersections and maintain a coherent combinatorial
structure.

The key insight is that this process applies to any partition, not just Voronoi
tessellations. Given any partition $\mathcal{P} = \{C_\alpha\}_{\alpha \in I}$
of a bounded domain $\mathcal{D} \subseteq \mathbb{R}^n$, we associate each cell
$C_\alpha$ with a vertex $v_\alpha$, yielding vertex set
$V = \{v_\alpha : \alpha \in I\}$. We construct the nerve complex
$\mathcal{N}_{\bullet}(\mathcal{P})$ to capture multi-way adjacency relationships:
$$
\mathcal{N}_p(\mathcal{P}) = \{S \subset \mathcal{P} : |S| = p+1 \text{ and } \bigcap_{C \in S} \overline{C} \ne \emptyset\}
$$
where $\overline{C}$ denotes the closure of cell $C$. The sequence
$$
\mathcal{N}_{\bullet}(\mathcal{P}) = \left(\mathcal{N}_0(\mathcal{P}), \mathcal{N}_1(\mathcal{P}), \mathcal{N}_2(\mathcal{P}), \ldots \right)
$$
forms a simplicial complex. This simplicial complex constructed from the partition is called the
nerve complex, and the construction that produces it is known as the nerve
construction \cite{borsuk1948theorem,leray1945forme,weil1952varietes}.

This construction generalizes beyond partitions. A covering of a space $X$ is a
collection of sets $\mathcal{U} = \{U_\alpha\}_{\alpha \in A}$ such that
$X = \bigcup_{\alpha \in A} U_\alpha$. Unlike partitions, coverings allow
arbitrarily complex overlaps between sets. Given any covering $\mathcal{U}$, we can form
the nerve complex $\mathcal{N}_{\bullet}(\mathcal{U})$ by including a
$k$-simplex whenever $k+1$ sets from the covering have nonempty intersection.
The partition case is simply a special instance where the sets have disjoint
interiors.

In the case of a Voronoi tessellation with cells in general position, the nerve complex
coincides precisely with the Delaunay triangulation
\cite{delaunay1934sphere,aurenhammer1991voronoi}, providing a geometric
realization of this abstract combinatorial structure. The nerve complex thus
serves as a geometric descriptor that reveals structural properties of the
partition amenable to analysis and regularization. The partition itself, with its
potentially complex local prediction functions, remains the primary object.

The construction yields different structures for different partition types
encountered in machine learning. For axis-aligned decision trees in
$\mathbb{R}^n$, the hyperrectangular cells meet along $(n-1)$-dimensional
facets, $(n-2)$-dimensional edges, and so forth, down to vertices where $2^n$
cells meet. Each such vertex contributes a $(2^n-1)$-simplex to the nerve
complex, making these complexes $(2^n-1)$-dimensional. MARS models
\cite{friedman1991multivariate} create similar hyperrectangular partitions but
with a crucial difference that the cell maps $f_\alpha$ are continuous along
cell boundaries (not affecting the geometric structure of the nerve complex). On
the other hand, the dimension of the Delaunay complex associated with a Voronoi
tessellation of an $n$-dimensional region of $\mathbb{R}^{n}$ is always $n$.
Oblique decision trees \cite{murthy1994system} and other methods using general
hyperplane splits create convex polyhedral cells that can meet in various
configurations. While axis-aligned splits necessarily create vertices where
$2^n$ cells meet, oblique splits allow more flexibility. In general position,
$n$ hyperplanes through a point still create $2^n$ regions, but the specific
configuration of splits in a trained model may result in fewer cells meeting at
any given point, potentially yielding nerve complexes of lower dimension. The
dimension of the resulting simplicial complex reflects both the ambient
dimension and the specific geometric configuration of the partition.

\subsubsection*{2.2.2 Riemannian Structure on Partition Simplicial Complexes}

When three cells meet at a point, what distinguishes a balanced configuration
from a pathological one? The simplicial complex captures which cells are
adjacent, but this combinatorial information alone cannot distinguish between
fundamentally different geometric configurations. Consider three cells meeting
at a point. In one configuration, they meet symmetrically with 120-degree angles
between their boundaries, suggesting a balanced partition. In another, two cells
form a narrow wedge against the third, creating sharp angles that may indicate
overfitting or poor generalization. A graph representation treating these
configurations identically would miss this crucial geometric distinction.

How can we capture these angular relationships? What about the relative sizes of
shared boundaries? When cells meet at unusual angles or have vastly different
volumes, these geometric features may signal problems with the partition
structure. To detect and quantify such properties, we need to enrich our
simplicial complex with geometric information.

Just as Riemannian metrics enrich smooth manifolds with notions of length,
angle, and curvature, partition complexes can be equipped with a Riemannian
structure. This additional structure assigns inner products to the spaces
associated with each simplex, enabling us to measure angles between edges,
volumes of cells, and areas of boundaries. These measurements transform the
discrete complex into a computational framework where geometric properties
become accessible for analysis and regularization.

\paragraph{From Smooth to Discrete Riemannian Geometry}

To understand how geometric concepts translate from smooth manifolds to
simplicial complexes, we begin with familiar ideas from linear algebra. In
Euclidean space $\mathbb{R}^n$, the standard inner product
$\langle v, w \rangle = v^T w$ enables us to compute lengths
$\|v\| = \sqrt{\langle v, v \rangle}$ and angles
$\cos(\theta) = \langle v, w \rangle / (\|v\| \|w\|)$. Any positive definite
matrix $A$ defines an alternative inner product
$\langle v, w \rangle_A = v^T A w$, inducing a different geometry where lengths
and angles are measured differently.

A smooth manifold like the unit sphere $S^2$ in $\mathbb{R}^3$ inherits
geometric structure from the ambient space. At each point $x \in S^2$, there
exists a tangent space $T_x S^2$ consisting of all vectors tangent to the sphere
at $x$. The Euclidean inner product of $\mathbb{R}^3$ restricts to an inner
product on each tangent space. While each individual tangent space just has the
standard inner product, the crucial point is that these tangent spaces
themselves are rotating as we move around the sphere. This variation of the
tangent spaces from point to point is what creates curvature.

To see this concretely: if you parallel transport a vector around a closed loop
on $S^2$, it returns rotated relative to its starting position. This rotation is
a manifestation of curvature, even though at each point we're using the
"standard" inner product on that tangent space. The Riemannian metric encodes
not just the inner products, but implicitly the entire geometric relationship
between nearby tangent spaces.

The key insight is that simplicial complexes possess an analogous, though
discrete, structure. While a point on a smooth manifold has infinitesimally
small neighborhoods, a vertex $v$ in a simplicial complex $K$ has a discrete
neighborhood: its star $\text{Star}(v) = \{\sigma \in K : v \in \sigma\}$,
consisting of all simplices containing $v$. The edges emanating from $v$ play
the role of "discrete tangent vectors," and the space of formal linear
combinations of these edges, denoted $\Lambda_1(\text{Star}(v))$, serves as the
discrete analogue of a tangent space.

This correspondence extends systematically. The exterior powers
$\Lambda^p T_x M$ of tangent spaces, which encode $p$-dimensional infinitesimal
volumes, correspond to spaces $\Lambda_p(\text{Star}(v))$ of $p$-chains in the
star. Similarly, differential forms on manifolds, which are integrated over
submanifolds, correspond to cochains on simplicial complexes, which are
evaluated on chains.

\paragraph{Building the Discrete Riemannian Structure} \label{sec:discrete-riemannian}

We now formalize these ideas. For a simplicial complex $K$, the space of
oriented $p$-chains supported on a subcomplex $L \subseteq K$ is:
$$
\Lambda_p(L) = \left\{\sum_{\sigma \in L_p} a_\sigma \cdot v_{i_0} \wedge v_{i_1} \wedge \cdots \wedge v_{i_p} : a_\sigma \in \mathbb{R}\right\} / \sim
$$
where $L_p$ denotes the set of $p$-simplices in $L$, each simplex
$\sigma = \{v_{i_0}, v_{i_1}, \ldots, v_{i_p}\}$ is represented by the formal
wedge product $v_{i_0} \wedge v_{i_1} \wedge \cdots \wedge v_{i_p}$, and $\sim$
is the equivalence relation
$$
v_{\pi(0)} \wedge v_{\pi(1)} \wedge \cdots \wedge v_{\pi(p)} \sim \text{sgn}(\pi) \cdot v_0 \wedge v_1 \wedge \cdots \wedge v_p
$$
for any permutation $\pi$. This antisymmetry relation ensures that the wedge
product encodes orientation: even permutations preserve the orientation while
odd permutations reverse it. Our notation $\Lambda_p(L)$, rather than the
classical $C_p(L)$, emphasizes that simplices are treated as antisymmetric
tensors, reflecting the deep connection between oriented simplicial chains and
exterior algebra.

Having established the chain spaces $\Lambda_p(L)$ for subcomplexes of $K$, we
can now formalize the Riemannian structure that encodes geometric information
throughout the complex. A Riemannian simplicial complex consists of a simplicial complex $K$ equipped
with a compatible family of inner products
$$
\langle \cdot, \cdot \rangle_{\sigma,p} : \Lambda_p(\overline{\text{Star}}(\sigma)) \times \Lambda_p(\overline{\text{Star}}(\sigma)) \rightarrow \mathbb{R}
$$
defined for all simplices $\sigma \in K$ and all dimensions $p \geq 0$, where
$\overline{\text{Star}}(\sigma)$ denotes the closed star of $\sigma$ (the
smallest subcomplex containing all simplices that have $\sigma$ as a face).

The compatibility of these inner products is ensured through two fundamental
conditions. First, the consistency condition requires that when a $p$-chain
$\tau$ belongs to multiple star complexes, its self-inner product remains
invariant:
$$
\langle \tau, \tau \rangle_{\sigma,p} = \langle \tau, \tau \rangle_{\sigma',p}
$$
for all
$\tau \in \overline{\text{Star}}(\sigma) \cap \overline{\text{Star}}(\sigma')$.
This ensures that geometric quantities such as lengths and volumes are
well-defined regardless of the local context in which they are computed.

Second, the restriction compatibility addresses the hierarchical nature of
simplicial complexes. When $\tau \subseteq \sigma$, the containment
$\overline{\text{Star}}(\sigma) \subseteq \overline{\text{Star}}(\tau)$ induces a natural restriction
map
$$
r_{\sigma,\tau}: \Lambda_p(\overline{\text{Star}}(\tau)) \to \Lambda_p(\overline{\text{Star}}(\sigma))
$$
For chains $\rho, \rho' \in \Lambda_p(\overline{\text{Star}}(\sigma))$, the compatibility
requires
$$
\langle \rho, \rho' \rangle_{\sigma,p} = \langle \rho, \rho' \rangle_{\tau,p}.
$$
This condition ensures that geometric measurements respect the inclusion
relationships between simplices, allowing local geometric information to
assemble consistently into a global structure. The compatibility conditions
reveal that for each $p \geq 0$, the collection of chain spaces
$\{\Lambda_p(\overline{\text{Star}}(\sigma))\}_{\sigma \in K}$ forms a sheaf
over $K$, with restriction maps satisfying the sheaf axioms
\cite{bredon1997sheaf,kashiwara2003sheaves}.

These compatibility conditions provide more than technical coherence; they
encode a fundamental principle of geometric consistency. Just as a Riemannian
manifold requires that tangent space inner products vary smoothly, a Riemannian
simplicial complex requires that star complex inner products fit together
compatibly. This compatibility enables the passage from local geometric data to
global geometric invariants, a theme that pervades our subsequent constructions.

The relationship to weighted graphs illuminates the generality of this
framework. A one-dimensional simplicial complex, consisting only of vertices and
edges, equipped with a Riemannian structure, extends the notion of a weighted
graph. At each vertex $v$, the inner product on
$\Lambda_0(\overline{\text{Star}}(v)) = \text{span}\{v\}$ assigns a vertex
weight. The inner product on $\Lambda_1(\overline{\text{Star}}(v))$ provides
richer information than simple edge weights: it defines a Gram matrix whose
diagonal entries determine edge weights and whose off-diagonal entries encode
angular relationships between edges meeting at $v$. For an edge
$e = v \wedge w$, the consistency condition ensures
$\langle e, e \rangle_{v,1} = \langle e, e \rangle_{w,1}$, yielding a unique
edge weight $w_e = \langle e, e \rangle_{v,1}^{1/2}$. However, for two edges
$e_1 = v \wedge w_1$ and $e_2 = v \wedge w_2$ meeting at $v$, the inner product
$\langle e_1, e_2 \rangle_{v,1}$ captures their geometric relationship at $v$,
information absent in classical weighted graphs. This additional structure
becomes essential when modeling systems where angular relationships matter, such
as molecular conformations or mesh geometries.

The passage from smooth to discrete is justified by deep results in differential
topology. Every smooth manifold admits triangulations that approximate its
geometry arbitrarily well \cite{whitehead1940c1,munkres1984elements}, and the
discrete operators converge to their smooth counterparts under refinement
\cite{hirani2003discrete,desbrun2005discrete}. This convergence theory validates
our use of simplicial complexes for analyzing partition models while providing
concrete computational tools.

Central to this correspondence is the preservation of fundamental calculus
structures in the discrete setting. Just as smooth manifolds support
differential forms and the exterior derivative, simplicial complexes support
cochains and coboundary operators \cite{bott1982differential,
  warner1983foundations}. The boundary operator
$$
\partial_p : \Lambda_p(K) \to \Lambda_{p-1}(K)
$$
generalizes the notion of taking boundaries:
$$
\partial_p(v_0 \wedge \cdots \wedge v_p) = \sum_{j=0}^p (-1)^j v_0 \wedge \cdots \wedge \hat{v}_j \wedge \cdots \wedge v_p
$$
where $\hat{v}_j$ indicates omission of vertex $v_j$. Its dual, the coboundary
operator
$$
\delta_p : \Omega^p(K) \to \Omega^{p+1}(K)
$$
on cochains (discrete differential forms), is defined by
$$
(\delta_p \omega)(\sigma) = \omega(\partial_{p+1} \sigma).
$$
This definition ensures that the discrete Stokes' theorem holds:
$$
\langle \delta_p \omega, \sigma \rangle = \langle \omega, \partial_{p+1} \sigma \rangle
$$
where $\langle \cdot, \cdot \rangle$ denotes the pairing between cochains and chains.

The profound connection to smooth differential geometry emerges through
integration. Under the de Rham map that sends a smooth differential form
$\alpha$ to the cochain $\sigma \mapsto \int_\sigma \alpha$, the smooth exterior
derivative $d$ corresponds to the discrete coboundary $\delta$. The defining
property of $\delta$ then becomes a consequence of the classical Stokes'
theorem:
$$
\int_\sigma d\alpha = \int_{\partial \sigma} \alpha
$$
This correspondence ensures our discrete framework maintains the essential
structure of calculus, with the algebraic relation $\delta \circ \delta = 0$
reflecting the geometric fact that $d \circ d = 0$.

This preservation of Stokes' theorem is not merely a formal nicety, it ensures
that our discrete geometric constructions respect conservation laws and
variational principles that govern physical systems. When we construct discrete
harmonic extensions and smoothing operators on simplicial complexes in Section
2.3, this foundational correspondence guarantees that our methods inherit the
stability and convergence properties of their smooth counterparts. Moreover,
this framework provides the foundation for richer geometric analyses, including
discrete gradient flows and Morse-Smale complexes, that reveal additional
structure in machine learning models.

\paragraph{Geometric Structure of Partition Complexes}

With this formal framework established, we can now specify the Riemannian
structure for partition-induced simplicial complexes. For partition-induced
complexes, the Riemannian structure encodes the geometry of cell boundaries and
their configurations. We begin by defining inner products on 0-chains
(vertices). For a vertex $v_i$ corresponding to cell $C_i$:
$$
\langle v_i, v_i \rangle_{v_i,0} = \text{vol}_n(C_i)
$$
This choice reflects that larger cells carry more "weight" in the geometric structure,
analogous to mass distributions in physics.

For 1-chains, the fundamental building blocks are inner products between edges
meeting at vertices. For edges $e = v_i \wedge v_j$ and $e' = v_i \wedge v_k$ in
$\overline{\text{Star}}(v_i)$, representing cells $C_i, C_j, C_k$ with $C_i$
adjacent to both $C_j$ and $C_k$, we consider the boundaries
$F_{ij} = \overline{C_i} \cap \overline{C_j}$ and
$F_{ik} = \overline{C_i} \cap \overline{C_k}$.

When these boundaries are $(n-1)$-dimensional (proper facets), we define:
\begin{equation} \label{edge.vi.inner.prod:eq}
  \langle e, e' \rangle_{v_i,1} = \sqrt{\text{vol}_{n-1}(F_{ij}) \cdot \text{vol}_{n-1}(F_{ik})} \cdot \cos\theta_{jk}
\end{equation}
where $\theta_{jk}$ is the interior dihedral angle of cell $C_i$ between the
hyperplanes $H_{ij}$ and $H_{ik}$ containing facets $F_{ij}$ and $F_{ik}$.

To compute this angle in practice, we use the formula:
$$
\cos\theta_{jk} = \frac{\mathbf{n}_{ij} \cdot \mathbf{n}_{ik}}{|\mathbf{n}_{ij}||\mathbf{n}_{ik}|}
$$
where $\mathbf{n}_{ij}$ is the normal vector to $H_{ij}$ pointing into $C_i$ and
$\mathbf{n}_{ik}$ is the normal vector to $H_{ik}$ pointing away from $C_i$.

This convention arises from the following geometric observation: The interior
angle is the length of the arc on the unit circle (in the plane perpendicular to
both hyperplanes) that lies within cell $C_i$. When we parallel transport the
inward-pointing normal $\mathbf{n}_{ij}$ along this interior arc, we arrive at
the outward-pointing normal $\mathbf{n}_{ik}$. Therefore, the angle between
these specifically oriented normals equals the interior dihedral angle.

When the boundaries have dimension less than $n-1$, we adopt the convention that
the inner product is zero, reflecting that the cells do not share proper facets
at vertex $v_i$.

The formula (\ref{edge.vi.inner.prod:eq}) captures both metric information
(facet areas) and angular relationships (dihedral angles). The angle term
$\cos\theta_{jk}$ quantifies the relative orientation of adjacent cells:
perpendicular boundaries yield zero inner product, while aligned boundaries
produce large values.

For the important case of hyperrectangular partitions arising from tree-based
models and MARS, all boundaries are aligned with coordinate axes. Consequently,
any two distinct boundaries meet at right angles, giving $\cos\theta_{jk} = 0$
for all $j \neq k$. This means the Gram matrices for such partitions are
diagonal, with diagonal entries determined solely by boundary volumes. This
geometric signature, zero off-diagonal entries, is characteristic of
axis-aligned partition methods and simplifies many computations.

We use the geometric mean
$\sqrt{\text{vol}_{n-1}(F_{ij}) \cdot \text{vol}_{n-1}(F_{ik})}$ to combine
boundary volumes, which maintains homogeneity under scaling. Alternative
choices, such as arithmetic or harmonic means, would lead to different geometric
weightings and potentially different regularization properties.

This construction generalizes naturally to higher dimensions through Gram
determinants. For a simplex $\sigma \in K$ and oriented edges
$e = \sigma \wedge v$ and $e' = \sigma \wedge v'$ in
$\Lambda_{\dim(\sigma)+1}(\overline{\text{Star}}(\sigma))$, we define:
$$\langle \sigma \wedge v, \sigma \wedge v' \rangle_{\sigma,\dim(\sigma)+1} = \text{vol}_{n-\dim(\sigma)-1}(F_{e})^{1/2} \cdot \text{vol}_{n-\dim(\sigma)-1}(F_{e'})^{1/2} \cdot \cos\phi,$$
where
$$F_e = \bigcap\{C_\alpha : v_\alpha \in \{\text{vertices of } \sigma\} \cup \{v\}\}$$
and similarly for $F_{e'}$. The volume term $\text{vol}_{n-\dim(\sigma)-1}(F)$
denotes the $(n-\dim(\sigma)-1)$-dimensional Lebesgue measure of the face when
this dimension is positive.

To understand the angle $\phi$, note that
$$F_\sigma = \bigcap\{C_\alpha : v_\alpha \in \{\text{vertices of } \sigma\}\}$$
contains both $F_e = F_\sigma \cap C_v$ and $F_{e'} = F_\sigma \cap C_{v'}$,
where $C_v$ and $C_{v'}$ are the cells corresponding to vertices $v$ and $v'$
respectively.

Within the affine span $H_\sigma$ of $F_\sigma$ (which has dimension
$n - \dim(\sigma)$), $H_e = \text{span}(F_e)$ and $H_{e'} = \text{span}(F_{e'})$
are hyperplanes of codimension 1 in $H_\sigma$. The interior dihedral angle
$\phi$ between these hyperplanes is measured as the arc length on the unit
circle in the plane perpendicular to both $H_e$ and $H_{e'}$ within $H_\sigma$.

Following our convention for interior angles:
$$\cos\phi = \frac{\mathbf{n}_e \cdot \mathbf{n}_{e'}}{|\mathbf{n}_e||\mathbf{n}_{e'}|}$$
where $\mathbf{n}_e$ and $\mathbf{n}_{e'}$ are normal vectors to $H_e$ and
$H_{e'}$ within $H_\sigma$, with $\mathbf{n}_e$ pointing toward $C_v$ and
$\mathbf{n}_{e'}$ pointing away from $C_{v'}$.

This construction naturally handles all possible dimensional configurations.
When $\dim(\sigma) = 0$ (vertices), we recover the $(n-1)$-dimensional facet
intersections from our earlier edge construction. As $\dim(\sigma)$ increases,
the dimension $n-\dim(\sigma)-1$ of the intersection decreases accordingly,
capturing the natural constraint that when more cells meet simultaneously, they
do so in lower-dimensional manifolds. For instance, in $\mathbb{R}^3$, two cells
meet along 2D faces, three cells meet along 1D edges, and four cells meet at 0D
points. The formula automatically adapts to measure volumes in the appropriate
dimension for each configuration.

In the special case where $n-\dim(\sigma)-1 = 0$, indicating point
intersections, we adopt the convention that $\text{vol}_0(F) = 1$ if $F$ is
non-empty and $\text{vol}_0(F) = 0$ otherwise. For such zero-dimensional
intersections, we set $\cos\phi = 1$ when both intersections are non-empty,
reflecting the discrete nature of point intersections where angular measurements
are not meaningful. This convention maintains consistency across dimensions
while acknowledging the fundamental difference between continuous geometric
measurements and discrete intersection patterns.

For $p$-simplices $\rho = \sigma \wedge v_1 \wedge \cdots \wedge v_q$ and
$\rho' = \sigma \wedge v'_1 \wedge \cdots \wedge v'_q$ in
$\Lambda_p(\overline{\text{Star}}(\sigma)))$, where $q = p - \dim(\sigma)$, the inner
product is defined as:
$$\langle \rho, \rho' \rangle_{\sigma,p} = \det(G_{\rho,\rho'})$$
where the Gram matrix has entries
$$
[G_{\rho,\rho'}]_{ij} = \langle \sigma \wedge v_i, \sigma \wedge v'_j \rangle_{\sigma,\dim(\sigma)+1}.
$$
This determinant formula, standard in multilinear algebra, extends the notion of
volume from Euclidean space to our discrete setting. This construction
represents the unique extension of inner products from edges to
higher-dimensional simplices that respects the multilinear structure of exterior
powers. The determinant formula ensures that the inner product on pp p-chains is
induced from the inner products on edges through the natural algebraic
structure, preserving orientation and capturing pp p-dimensional volumes
\cite{federer1996geometric, whitney1957geometric}.

\paragraph{Example: Three Cells Meeting at a Vertex}
We can now make precise the geometric distinctions raised at the beginning of
this section. Consider three cells $C_1$, $C_2$, $C_3$ meeting at a vertex $v$
in $\mathbb{R}^2$. The three edges at $v$ are: $e_1 = v \wedge v_1$,
$e_2 = v \wedge v_2$, and $e_3 = v \wedge v_3$. For the 2-simplex
$\rho = v \wedge v_1 \wedge v_2$, the Gram matrix is:
$$G_\rho = \begin{pmatrix}
\langle e_1, e_1 \rangle_v & \langle e_1, e_2 \rangle_v \\
\langle e_2, e_1 \rangle_v & \langle e_2, e_2 \rangle_v
\end{pmatrix} = \begin{pmatrix}
\text{vol}_{n-1}(F_{v,v_1}) & \gamma_{12} \\
\gamma_{12} & \text{vol}_{n-1}(F_{v,v_2})
\end{pmatrix}$$
where $\gamma_{12} = \sqrt{\text{vol}_{n-1}(F_{v,v_1}) \cdot \text{vol}_{n-1}(F_{v,v_2})} \cos\theta_{12}$.
The determinant captures both the areas of boundaries and their angular
relationship. For axis-aligned partitions where $\theta_{12} = 90°$, the
off-diagonal terms vanish, yielding
$$
\det(G_\rho) = \text{vol}_{n-1}(F_{v,v_1}) \cdot \text{vol}_{n-1}(F_{v,v_2}),
$$
a simple product of boundary lengths.

\paragraph{Example: Riemannian Structure on Tree Partitions}
Consider a decision tree in $\mathbb{R}^2$ creating four cells through
perpendicular splits at $x_1 = 2$ and $x_2 = 1$ (for $x_1 < 2$) and
$x_2 = 3$ (for $x_1 \geq 2$).

For the 0-chain structure, if the domain is $[0,4] \times [0,4]$, then:
\begin{itemize}[label=-, topsep=-5pt, partopsep=0pt, parsep=0pt, itemsep=0pt, after=\vspace{0.2cm}]
  \item $\langle v_1, v_1 \rangle_{v_1,0} = \text{vol}_2(C_1) = 2 \times 1 = 2$
  \item $\langle v_2, v_2 \rangle_{v_2,0} = \text{vol}_2(C_2) = 2 \times 3 = 6$
  \item $\langle v_3, v_3 \rangle_{v_3,0} = \text{vol}_2(C_3) = 2 \times 1 = 2$
  \item $\langle v_4, v_4 \rangle_{v_4,0} = \text{vol}_2(C_4) = 2 \times 3 = 6$
\end{itemize}

For the 1-chain structure at vertex $v_1$ representing cell
$C_1 = \{x_1 < 2, x_2 < 1\}$:
\begin{itemize}[label=-, topsep=-5pt, partopsep=0pt, parsep=0pt, itemsep=0pt, after=\vspace{0.2cm}]
  \item The boundary $F_{12}$ with cell $C_2 = \{x_1 < 2, x_2 \geq 1\}$ is the line segment $\{(x_1, 1) : x_1 < 2\}$ with length 2.
  \item The boundary $F_{13}$ with cell $C_3 = \{x_1 \geq 2, x_2 < 1\}$ is the line segment $\{(2, x_2) : x_2 < 1\}$ with length 1.
\end{itemize}
These boundaries meet at right angles at point $(2,1)$, so $\cos\theta_{23} = 0$. The Gram matrix at $v_1$ becomes:
$$G_{v_1} = \begin{pmatrix}
\langle e_{12}, e_{12} \rangle_{v_1,1} & \langle e_{12}, e_{13} \rangle_{v_1,1} \\
\langle e_{13}, e_{12} \rangle_{v_1,1} & \langle e_{13}, e_{13} \rangle_{v_1,1}
\end{pmatrix} = \begin{pmatrix} 2 & 0 \\ 0 & 1 \end{pmatrix}$$

The condition number $\kappa(G_{v_1}) = 2$ indicates mild anisotropy in the
partition structure at this vertex. The zero off-diagonal entries throughout the
complex reflect the perpendicular nature of axis-aligned splits, a geometric
signature of standard decision trees. For such axis-aligned partitions,
condition numbers directly measure the ratios of boundary lengths (in 2D) or
areas (in 3D), providing a clear geometric interpretation of partition
regularity.

\subsubsection*{2.2.3 Ensemble Geometric Structures}

When multiple decision trees partition the same feature space, how should we
reconcile their different geometric views? Consider two trees in a random forest
that split the same region differently. One tree might place a boundary through
the middle of a cluster, while another respects the cluster's natural boundary.
Some cells appear in nearly every tree, suggesting stable geometric structures,
while others appear rarely, indicating disagreement about the appropriate
partition.

This variation across trees contains valuable information. Regions where trees
consistently agree likely represent robust patterns in the data, while regions
of disagreement may indicate uncertainty or complexity. By tracking how often
cells from different trees coincide or share boundaries, we can build a richer
geometric picture than any single tree provides.

Tree ensembles $\mathcal{E} = \{T_1, T_2, \ldots, T_B\}$ create multiple
overlapping partitions of the feature space. Each tree $T_b$ induces its own
partition $\mathcal{P}_{T_b}$. The refined partition approach overlays all
individual tree partitions:
$$\mathcal{P}_{\mathcal{E}} = \left\{\bigcap_{b=1}^B C_{b,\alpha_b} : C_{b,\alpha_b} \in \mathcal{P}_{T_b}\right\} \setminus \{\emptyset\}$$

We now construct a Riemannian structure that captures both the geometry of this
refined partition and the agreement patterns between trees in the ensemble.

\textbf{The 0-chain structure.} We begin with the simplest case: inner products
on vertices. Each vertex $v_i$ in the nerve complex corresponds to a cell $C_i$
in the refined partition. Following our principle that cell volume reflects
importance, we define:
$$
\langle v_i, v_i \rangle_{v_i,0}^{\text{ens}} = \text{vol}_n(C_i) + \lambda_0 \cdot \text{Freq}(C_i)
$$
where
$$
\text{Freq}(C_i) = \frac{1}{B}\sum_{b=1}^B \mathbf{1}[C_i \subseteq \text{some cell of tree } b]
$$
measures how many trees contribute to defining this cell.

\textbf{The 1-chain structure.} For edges, we must capture how the ensemble
views adjacency relationships. Following the kernel perspective on random
forests, we define the co-occurrence frequency between two cells:
$$
K(C_i, C_j) = \frac{1}{B} \sum_{b=1}^B \mathbf{1}[\text{cells } C_{b,\alpha_b^{(i)}} \text{ and } C_{b,\alpha_b^{(j)}} \text{ are adjacent in tree } b]
$$
This quantity, ranging from 0 to 1, measures how consistently the ensemble
treats these cells as neighbors.

To create a Riemannian structure that captures both the refined partition's
geometry and the ensemble's confidence in that geometry, we enhance the standard
geometric inner products with ensemble-based terms. The geometric term captures
the physical configuration (boundary sizes and angles), while the ensemble term
adds information about statistical stability across the ensemble. We combine
them additively so that edges can have non-zero inner product even when they
meet at right angles (where the geometric term vanishes), allowing the ensemble
agreement to maintain connectivity in the Riemannian structure. The ensemble
term takes a similar functional form to the geometric term to ensure comparable
scaling.

For edges $e = v_i \wedge v_j$ and $e' = v_i \wedge v_k$ in
$\overline{\text{Star}}(v_i)$, we combine geometric information with ensemble
agreement:
$$
\langle e, e' \rangle_{v_i,1}^{\text{ens}} = \sqrt{\text{vol}_{n-1}(F_{ij}) \cdot \text{vol}_{n-1}(F_{ik})} \cdot \cos\theta_{jk} + \lambda_1 \sqrt{K(C_i, C_j) \cdot K(C_i, C_k)}
$$
where $F_{ij} = \overline{C_i} \cap \overline{C_j}$ and $\lambda_1 > 0$ weights the ensemble term.

For axis-aligned tree partitions, $\cos\theta_{jk} = 0$ for $j \neq k$, yielding:
$$
\langle e, e' \rangle_{v_i,1}^{\text{ens}} = \begin{cases}
  \text{vol}_{n-1}(F_{ij}) + \lambda_1 K(C_i, C_j) & \text{if } e = e' \\
  \lambda_1 \sqrt{K(C_i, C_j) \cdot K(C_i, C_k)} & \text{if } e \neq e'
\end{cases}
$$
The off-diagonal terms, arising purely from ensemble agreement, quantify how similarly the trees view different adjacency relationships.

\textbf{The general p-chain structure.} The construction extends naturally to higher dimensions. For a collection of cells $\{C_{i_1}, C_{i_2}, \ldots, C_{i_{p+1}}\}$, we generalize the co-occurrence frequency:
$$
K(C_{i_1}, \ldots, C_{i_{p+1}}) = \frac{1}{B} \sum_{b=1}^B \mathbf{1}[\text{cells } C_{b,\alpha_b^{(i_1)}}, \ldots, C_{b,\alpha_b^{(i_{p+1})}} \text{ form a } p\text{-simplex in tree } b]
$$
This measures how often these $p+1$ cells simultaneously meet in individual trees.

For $p$-simplices $\rho = \sigma \wedge v_1 \wedge \cdots \wedge v_q$ and $\rho' = \sigma \wedge v'_1 \wedge \cdots \wedge v'_q$ in $\Lambda_p(\overline{\text{Star}}(\sigma)))$, we define:
$$
\langle \rho, \rho' \rangle_{\sigma,p}^{\text{ens}} = \langle \rho, \rho' \rangle_{\sigma,p}^{\text{geom}} + \lambda_p \cdot K_p(\rho, \rho')
$$
where the geometric term follows Section 2.2.2, and the ensemble term uses:
$$
K_p(\rho, \rho') = \det(K_{\rho,\rho'})
$$
The Gram matrix $K_{\rho,\rho'}$ has entries:
$$
[K_{\rho,\rho'}]_{ij} = K(\text{vertices of } \sigma \cup \{v_i\}, \text{vertices of } \sigma \cup \{v'_j\})
$$
More explicitly, if $\sigma$ has vertices $\{u_1, \ldots, u_k\}$, then:
$$
[K_{\rho,\rho'}]_{ij} = K(u_1, \ldots, u_k, v_i, u_1, \ldots, u_k, v'_j)
$$

For example, if $\sigma = v_0$ (a vertex), $\rho = v_0 \wedge v_1 \wedge v_2$
and $\rho' = v_0 \wedge v_3 \wedge v_4$, then:
$$
K_{\rho,\rho'} = \begin{pmatrix}
K(v_0, v_1, v_3) & K(v_0, v_1, v_4) \\
K(v_0, v_2, v_3) & K(v_0, v_2, v_4)
\end{pmatrix}
$$
Each entry measures how often the corresponding triple of cells forms a
2-simplex across the ensemble.

\textbf{Choice of parameters.} The weights $\lambda_p$ balance geometric and
ensemble contributions at each dimension. A natural choice ensures both terms
have comparable scales:
$$
\lambda_p = \bar{v}_p = \frac{1}{|K_p|} \sum_{\sigma \in K_p} \text{vol}_{n-p}(F_\sigma)
$$
where $|K_p|$ denotes the number of $p$-simplices and $F_\sigma$ is the face corresponding to simplex $\sigma$.

This construction reveals how ensemble methods naturally incorporate uncertainty
into their geometric structure. High co-occurrence frequencies indicate stable
geometric relationships that persist across the ensemble, while low frequencies
identify regions where trees disagree about local structure. The Riemannian
framework transforms these agreement patterns into geometric quantities that can
guide refinement and regularization.

\subsubsection*{2.2.4 Gradient Boosting as Geometric Flow}

What happens to the geometry of partitions as gradient boosting sequentially
adds trees? Unlike random forests where all trees exist simultaneously, gradient
boosting builds models incrementally, with each new tree modifying the
ensemble's partition structure. Early trees might create broad regions capturing
global patterns, while later trees add fine-grained cells to correct local
errors. This sequential refinement suggests a dynamic process, but can we
characterize it mathematically as a geometric flow?

Understanding this evolution could unlock new insights into gradient boosting's
effectiveness and limitations. Does the partition geometry stabilize as training
progresses, or does it become increasingly fragmented? When should we stop
adding trees based on geometric rather than purely predictive criteria? By
viewing gradient boosting through the lens of evolving geometric structures, we
can develop principled approaches to model selection and regularization that go
beyond monitoring training error.

Gradient boosting exhibits a natural geometric interpretation when viewed
through the lens of evolving Riemannian structures. Unlike random forests where
all trees contribute simultaneously, gradient boosting sequentially adds trees,
creating a dynamic evolution of the partition geometry. At iteration $m$, the
ensemble consists of trees $T_1, T_2, \ldots, T_m$ with the refined partition:
$$\mathcal{P}^{(m)} = \left\{\bigcap_{i=1}^m C_{i,k_i} : C_{i,k_i} \in \mathcal{P}_{T_i}\right\} \setminus \{\emptyset\}$$

As we add trees, both the partition structure and the co-occurrence frequencies
evolve. Each cell in the refined partition at iteration $m$ has the form
$C_i^{(m)} = \bigcap_{b=1}^m C_{b,\alpha_b^{(i)}}$ where
$C_{b,\alpha_b^{(i)}} \in \mathcal{P}_{T_b}$ is the cell from tree $b$ that
participates in forming $C_i^{(m)}$.

The co-occurrence frequency at iteration $m$ for cells $C_i^{(m)}$ and $C_j^{(m)}$ is:
$$
K^{(m)}(C_i^{(m)}, C_j^{(m)}) = \frac{1}{m} \sum_{b=1}^m \mathbf{1}[\text{cells } C_{b,\alpha_b^{(i)}} \text{ and } C_{b,\alpha_b^{(j)}} \text{ are adjacent in tree } b]
$$
When tree $T_{m+1}$ is added, the refined partition may change. A cell
$C_i^{(m)}$ may be split into multiple cells in $\mathcal{P}^{(m+1)}$, or it may
remain intact if $C_{m+1,\alpha_{m+1}^{(i)}}$ (the corresponding cell in tree
$T_{m+1}$) fully contains it. For cells $C_i^{(m+1)}$ and $C_j^{(m+1)}$ in the
new refined partition:
$$
K^{(m+1)}(C_i^{(m+1)}, C_j^{(m+1)}) = \frac{1}{m+1} \sum_{b=1}^{m+1} \mathbf{1}[\text{cells } C_{b,\alpha_b^{(i)}} \text{ and } C_{b,\alpha_b^{(j)}} \text{ are adjacent in tree } b]
$$

For the Riemannian structure, following Section 2.2.3, edges $e = v_i \wedge v_j$ and $e' = v_i \wedge v_k$ have inner product:
$$\langle e, e' \rangle_{v_i,1}^{(m)} = \sqrt{\text{vol}_{n-1}(F_{ij}^{(m)}) \cdot \text{vol}_{n-1}(F_{ik}^{(m)})} \cdot \cos\theta_{jk} + \lambda_1 \sqrt{K^{(m)}(C_i^{(m)}, C_j^{(m)}) \cdot K^{(m)}(C_i^{(m)}, C_k^{(m)})}$$

For axis-aligned trees where $\cos\theta_{jk} = 0$ for $j \neq k$, this simplifies to:
$$
\langle e, e' \rangle_{v_i,1}^{(m)} = \begin{cases}
  \text{vol}_{n-1}(F_{ij}^{(m)}) + \lambda_1 K^{(m)}(C_i^{(m)}, C_j^{(m)}) & \text{if } e = e' \\
  \lambda_1 \sqrt{K^{(m)}(C_i^{(m)}, C_j^{(m)}) \cdot K^{(m)}(C_i^{(m)}, C_k^{(m)})} & \text{if } e \neq e'
\end{cases}
$$

The evolution of the inner product structure reveals how boosting modifies geometry:
$$\langle e, e' \rangle_{v_i,1}^{(m+1)} - \langle e, e' \rangle_{v_i,1}^{(m)} = \Delta_{\text{geom}} + \Delta_{\text{ens}}$$

For the diagonal case ($e = e' = v_i \wedge v_j$):
$$\Delta_{\text{geom}} = \text{vol}_{n-1}(F_{ij}^{(m+1)}) - \text{vol}_{n-1}(F_{ij}^{(m)})$$
$$\Delta_{\text{ens}} = \lambda_1 \left[K^{(m+1)}(C_i^{(m)}, C_j^{(m)}) - K^{(m)}(C_i^{(m)}, C_j^{(m)})\right] = \frac{\lambda_1}{m+1}\left[\mathbf{1}[\text{adjacent in } T_{m+1}] - K^{(m)}(C_i^{(m)}, C_j^{(m)})\right]$$

For the off-diagonal case ($e = v_i \wedge v_j$, $e' = v_i \wedge v_k$ with $j \neq k$):
$$\Delta_{\text{geom}} = 0 \quad \text{(for axis-aligned trees)}$$
$$\Delta_{\text{ens}} = \lambda_1 \left[\sqrt{K^{(m+1)}(C_i^{(m)}, C_j^{(m)}) \cdot K^{(m+1)}(C_i^{(m)}, C_k^{(m)})} - \sqrt{K^{(m)}(C_i^{(m)}, C_j^{(m)}) \cdot K^{(m)}(C_i^{(m)}, C_k^{(m)})}\right]$$

The geometric change $\Delta_{\text{geom}}$ is non-zero only when tree $T_{m+1}$
splits one or both of the cells $C_i^{(m)}$, $C_j^{(m)}$, creating a new or
modified boundary. The ensemble change $\Delta_{\text{ens}}$ always occurs and
can be positive (strengthening the connection) or negative (weakening it)
depending on whether the new tree agrees with the existing ensemble structure.

The gradient boosting process thus traces a path through the space of Riemannian
simplicial complexes, with each tree refining both the combinatorial structure
(through new cells) and the geometric structure (through modified co-occurrence
patterns). This dual evolution provides rich information about the learning
dynamics, suggesting that geometric properties could guide both tree construction
and stopping criteria. We explore these regularization possibilities in detail
in the next section.

\subsubsection*{2.2.5 Geometric Regularization}

How can we translate geometric insights into practical regularization strategies
that improve model performance? Traditional regularization penalizes the number
of parameters or tree depth, but these indirect constraints fail to address the
spatial pathologies we have identified. A model might satisfy depth constraints
while creating cells with extreme volume disparities, where tiny regions overfit
to individual points while vast empty spaces remain in adjacent cells. Another
might generate boundaries meeting at knife-sharp angles that will likely
misclassify nearby test points. We need regularization that directly targets
these geometric problems.

What would ideal geometric regularization achieve? It should penalize extreme
volume ratios between cells, encouraging balanced partitions. It should
discourage sharp angles where cells meet, promoting smoother decision
boundaries. For ensemble methods, it should reward geometric
consensus—configurations where multiple models agree on stable boundary
locations. By incorporating these geometric criteria into model training, we can
guide learning algorithms toward partitions that not only fit the data but also
exhibit the spatial regularity associated with good generalization.

The Riemannian structure on partition complexes enables the formulation of
geometric regularization terms that complement classical approaches from Section
2.1.2. Rather than penalizing only the number of cells or the smoothness of
functions within them, we can now formulate regularization terms that directly
target spatial pathologies.

A key insight for high-dimensional applications is that real data typically
exhibits low intrinsic dimensionality despite residing in high-dimensional
feature spaces. Rather than computing geometric properties in the full ambient
space, implementations should leverage local dimensionality reduction within
each partition cell. By projecting to subspaces that retain 95-99\% of local
variance, we can ensure that geometric computations remain both meaningful and
tractable. For instance, data in $\mathbb{R}^{1000}$ might have an intrinsic
dimensionality of approximately 10, reducing boundary volume computations from
intractable high-dimensional operations to manageable calculations in this
lower-dimensional representation. This dimensionality reduction approach
provides a practical adaptation of our geometric framework to the realities of
high-dimensional data.

With the geometric framework in place, penalties can explicitly depend on
spatial properties of the partition. Natural geometric quantities include cell
volumes $\text{vol}_{n}(C_\alpha)$, surface areas of partition boundaries
$\text{vol}_{n-1}(\partial C_\alpha \cap \partial C_\beta)$, and angles between
adjacent cells at their boundaries. Such penalties encourage balanced partitions
with regular cell shapes and smooth transitions between regions. For instance, a
geometric regularization term takes the form
$$
R_{\text{geom}} = \sum_{\alpha} \phi(\text{vol}_n(C_\alpha)) + \sum_{\langle \alpha,\beta \rangle} \psi(\text{vol}_{n-1}(\partial C_\alpha \cap \partial C_\beta), \theta_{\alpha\beta}),
$$
where $\phi$ penalizes extreme volumes, $\psi$ incorporates both boundary size
and the angle $\theta_{\alpha\beta}$ between cells, and the second sum runs over
adjacent cell pairs.

As we will demonstrate, the Riemannian structure on partition-induced simplicial
complexes provides a principled framework for computing and incorporating these
geometric properties into regularization schemes. The simplicial complex
naturally encodes adjacency relationships, while the Riemannian metric captures
local geometric information including angles and volumes. This geometric
perspective not only suggests alternative regularization strategies but also
provides tools for analyzing existing methods through their implicit geometric
biases.

\paragraph{Geometric Quantities and Their Statistical Interpretation} The
Riemannian structure enables computation of several geometric quantities with
direct statistical relevance. Cell volumes, encoded in the diagonal entries of
Gram matrices, indicate the "influence region" of each local model. Extreme
volume disparities may signal overfitting, where the model creates tiny cells to
accommodate outliers or noise. The condition numbers of vertex Gram matrices
$\kappa(G_v)$ measure local anisotropy, with large values indicating stretched
or poorly shaped cells that may not generalize well to new data.

Boundary properties provide complementary information. The total surface area of
a cell's boundary, $\sum_{j \sim i} \text{vol}_{n-1}(F_{ij})$ where the sum runs
over adjacent cells, quantifies the cell's complexity. Cells with large
surface-to-volume ratios may indicate excessive fragmentation of the feature
space. Dihedral angles between adjacent cells, extracted from the off-diagonal
entries of Gram matrices, reveal the smoothness of decision boundaries. Sharp
angles (small $\cos\theta$) suggest abrupt transitions that may indicate
overfitting to training data patterns.

Higher-order simplices capture multi-way interactions between cells. The volume
of a $p$-simplex, computed as $\sqrt{\det(G_\sigma)}$ for the Gram matrix on
$\Lambda_p(\overline{\text{Star}}(\sigma))$, measures the "strength" of the
$(p+1)$-way intersection. Large volumes for high-dimensional simplices indicate
complex intersection patterns that may be difficult to interpret or may result
from overfitting.

\paragraph{General Formulation of Geometric Penalties} Building on these quantities, we
propose geometric regularization terms that can be incorporated into the
training objectives of partition models. A general geometric regularization term
takes the form:
$$
R_{\text{geom}}(\mathcal{P}) = \sum_{k=0}^{\dim(K)} \lambda_k \sum_{\sigma \in K_k} \psi_k(\sigma)
$$
where $\psi_k$ are penalty functions operating on $k$-simplices and
$\lambda_k \geq 0$ are regularization strengths for each dimension.

For vertices (0-simplices), natural penalties include:
$$
\psi_0(v_i) = \phi\left(\frac{\text{vol}_{n}(C_i)}{\text{vol}_{n}(\mathcal{D})/|K_0|}\right) + \gamma \log(\kappa(G_{v_i}))
$$
where $\phi$ is a convex function penalizing deviations from the average cell
volume (e.g., $\phi(t) = t + 1/t$ for $t > 0$), and the second term penalizes
anisotropic local geometry. This encourages balanced partitions with
well-conditioned cells.

For edges (1-simplices), we can penalize both boundary complexity and sharp angles:
$$
\psi_1(v_i \wedge v_j) = \text{vol}_{n-1}(F_{ij})^{\alpha} \cdot h(\theta_{ij})
$$
where $\alpha > 1$ penalizes large boundaries and $h$ penalizes extreme angles, e.g.,
$$
h(\theta) = (\pi/2 - |\theta - \pi/2|)^2
$$
favors perpendicular boundaries. This encourages partitions with moderate boundary sizes and avoids both very sharp and very shallow angles.

For higher-dimensional simplices, penalties can target complex intersection patterns:
$$
\psi_p(\sigma) = \left(\frac{\text{vol}_p(\sigma)}{\text{vol}_p^{\text{ref}}}\right)^\beta \cdot \mathbf{1}[p > p_0]
$$
where $\text{vol}_p^{\text{ref}}$ is a reference volume scale, $\beta > 0$ controls the penalty strength, and $p_0$ is a threshold beyond which high-order intersections are discouraged. This limits the combinatorial complexity of the partition.

\paragraph{Implementation in Partition Models} These penalties can be
incorporated into specific partition models as follows. For decision trees, the
splitting criterion becomes:
$$
\text{Score}(s) = \text{ImpurityReduction}(s) - \eta \cdot \Delta R_{\text{geom}}(s)
$$
where $\Delta R_{\text{geom}}(s)$ is the change in geometric penalty from split
$s$. This can be computed efficiently by maintaining local geometric information
during tree construction.

In MARS models, geometric penalties augment the forward stepwise procedure. When
considering adding a new basis function, we evaluate both the reduction in
residual sum of squares and the induced change in partition geometry. The
geometric penalty can favor basis functions that create more regular partitions,
potentially improving model interpretability and generalization.

For random forests and other ensemble methods, geometric regularization can
guide both individual model training and ensemble aggregation. Individual trees
can be regularized to avoid extreme geometries, while the ensemble combination
weights can be adjusted to favor geometrically compatible partitions. The
co-occurrence frequencies $K(C_i, C_j)$ provide a natural measure of geometric
compatibility across trees.

\paragraph{Geometric Regularization for Gradient Boosting} In gradient boosting,
the sequential nature of gradient boosting offers unique regularization
opportunities through the geometric flow perspective developed in Section 2.2.4.
We propose three complementary approaches:

The geometric flow perspective suggests the following monitoring criterion.
Define the geometric energy at iteration $m$ as:
$$
E^{(m)} = \sum_{v \in K_0^{(m)}} \log \kappa(G_v^{(m)})
$$
where $G_v^{(m)}$ is the Gram matrix at vertex $v$ using the current Riemannian
structure. We hypothesize that this geometric energy measure could potentially
track the evolution of partition regularity during boosting. The rate of energy
change:
$$\frac{E^{(m+1)} - E^{(m)}}{E^{(m)}} > \epsilon$$
provides a candidate geometric early stopping criterion that warrants empirical
investigation.

\textbf{Regularized tree construction.} When fitting tree $T_{m+1}$ in gradient
boosting with general loss function $\ell(y, \hat{y})$, incorporate geometric
objectives:
$$
T_{m+1} = \arg\min_T \left\{ \mathcal{L}_{\text{residual}}(T) + \gamma \cdot \mathcal{R}_{\text{geom}}(T | \mathcal{P}^{(m)}) \right\}
$$
where
$$
\mathcal{L}_{\text{residual}}(T) = \sum_{i=1}^N \ell(y_i, F^{(m)}(x_i) + \eta f_T(x_i)),
$$
$\eta$ is the learning rate and $f_T(x_i)$ the prediction of tree $T$ at point
$x_i$. The geometric regularizer favors trees that maintain high co-occurrence
with existing structure:
$$\mathcal{R}_{\text{geom}}(T | \mathcal{P}^{(m)}) = -\sum_{(i,j) \in \text{adjacent pairs in } T} K^{(m)}(C_i \cap \text{leaf}_T, C_j \cap \text{leaf}_T)$$
This encourages new trees to respect boundaries where previous trees agree, reducing geometric complexity.

\textbf{Weighted co-occurrence.} In standard gradient boosting with learning
rate $\eta$, predictions combine as $F^{(m)}(x) = \sum_{i=1}^m \eta f_{T_i}(x)$.
This suggests using weighted co-occurrence frequencies in the Riemannian
structure:
$$
K_{\eta}^{(m)}(C_i^{(m)}, C_j^{(m)}) = \frac{\sum_{b=1}^m \eta^b \mathbf{1}[\text{cells } C_{b,\alpha_b^{(i)}} \text{ and } C_{b,\alpha_b^{(j)}} \text{ are adjacent in tree } b]}{\sum_{b=1}^m \eta^b}
$$
For $\eta < 1$, later trees contribute less to the geometric structure,
reflecting their role in fine-tuning rather than establishing primary
boundaries.

\textbf{Spectral monitoring.} The evolving Riemannian structure induces a
sequence of Hodge Laplacian operators $\{\Delta_p^{(m)}\}_{p=0}^{\dim K}$ on the
nerve complex of partition $\mathcal{P}^{(m)}$. These operators encode geometric
properties at multiple scales, providing rich information about the evolution of
partition structure during boosting.

The graph Laplacian on vertices is:
$$\Delta_0^{(m)} = D^{(m)} - A^{(m)}$$
where $D^{(m)}$ is the diagonal degree matrix and $A^{(m)}$ is the weighted adjacency matrix with entries:
$$A_{ij}^{(m)} = \begin{cases}
\text{vol}_{n-1}(F_{ij}^{(m)}) + \lambda_1 K^{(m)}(C_i^{(m)}, C_j^{(m)}) & \text{if } C_i^{(m)} \text{ and } C_j^{(m)} \text{ are adjacent} \\
0 & \text{otherwise}
\end{cases}$$

The eigenvalues $0 = \lambda_1^{(m)} \leq \lambda_2^{(m)} \leq \cdots$ encode
global connectivity. The spectral gap $\lambda_2^{(m)}$ measures how
well-connected the partition is, with large gaps indicating strong connectivity
and coherent geometric structure, while small gaps suggest near-disconnection
through fragmentation or geometric bottlenecks.

Higher-order Hodge Laplacians
$\Delta_p^{(m)} = \partial_{p+1} \partial_{p+1}^* + \partial_p^* \partial_p$ for
$p \geq 1$ capture multi-way interactions beyond pairwise relationships. The
edge Laplacian $\Delta_1^{(m)}$ reveals patterns in edge flows and boundary
cycles, while $\Delta_2^{(m)}$ captures the structure of triple cell
intersections in three or more dimensions. Higher-order Laplacians encode
increasingly complex multi-way meeting patterns, providing a complete
multi-scale view of the partition's geometric structure.

The complete spectral signature tracks geometric robustness across scales:
$$\text{SpectralSignature}^{(m)} = \left(\frac{\lambda_2^{(m)}}{\lambda_2^{(1)}}, \frac{\mu_{1,1}^{(m)}}{\mu_{1,1}^{(1)}}, \frac{\mu_{2,1}^{(m)}}{\mu_{2,1}^{(1)}}, \ldots\right)$$
where $\mu_{p,1}^{(m)}$ is the smallest positive eigenvalue of $\Delta_p^{(m)}$.

Decreasing ratios suggest loss of geometric robustness at the corresponding
scale. We propose monitoring:
$$\text{GeometricHealth}^{(m)} = \min_{p: \mu_{p,1}^{(1)} > 0} \frac{\mu_{p,1}^{(m)}}{\mu_{p,1}^{(1)}}$$

When this measure drops below a threshold, it may indicate the ensemble is
transitioning from learning robust structure to overfitting, though empirical
validation is needed to establish appropriate thresholds for specific
applications.

For computational efficiency, only the first few eigenvalues at each scale need
be computed using iterative methods.

\paragraph{Comparison with Classical Regularization} Classical machine learning
regularization and geometric regularization fundamentally differ in their
approach to controlling model complexity. Standard parameter regularization,
such as L1 or L2 penalties on weight vectors, controls model capacity globally
through constraints like $\|w\|_2^2$ but remains agnostic to the local geometric
structure of the induced partition. Similarly, structural regularization
approaches that limit tree depth or the number of parameters impose uniform
complexity constraints without considering the quality or configuration of the
resulting partition. In contrast, geometric regularization directly targets
specific spatial pathologies such as cells with extreme volume disparities,
boundaries meeting at sharp angles, or surface-area-to-volume ratios that
indicate excessive fragmentation. While standard regularization may indirectly
influence partition geometry through its constraints on model parameters,
geometric regularization provides direct control over these spatial properties.
This direct control over geometric structure offers the potential for models
that not only generalize better but also exhibit more interpretable decision
boundaries and more balanced spatial decompositions of the feature space.

These geometric regularization strategies apply the mathematical structures of
Sections 2.2.1-2.2.4 to partition model training. By penalizing geometric
irregularities directly, this approach addresses limitations of classical
regularization approaches that focus solely on model complexity or function
smoothness. The result is a principled framework for creating partition models
with better generalization properties and more interpretable geometric
structures.

\subsection*{2.3 Refinement of Partition Models}

The geometric framework developed in Section 2.2 can be extended to address the
refinement of piecewise constant predictions in partition models. A decision
tree predicts 0.2 in one cell and 0.8 in an adjacent cell, creating an abrupt
jump at their boundary. Should we accept this discontinuity, or is there
information in the geometric structure that could guide smoothing? When three
cells meet at a vertex with values 0.2, 0.8, and 0.5, how should their angular
configuration influence interpolation?

This challenge reflects a fundamental limitation: while partition models produce
piecewise constant predictions, they often approximate smooth phenomena. A tree
modeling house prices creates sharp boundaries between neighborhoods, yet prices
vary continuously across space. Standard graph-based smoothing methods address
this by updating values using weighted averages along edges. However, these
methods treat each edge independently. When three cells meet at a vertex, graph
Laplacian smoothing cannot distinguish whether they meet symmetrically or form a
narrow wedge, potentially smoothing inappropriately.

The importance of higher-order geometric relationships becomes clear in
structured prediction tasks. In molecular property prediction, aromatic rings
fundamentally alter chemical behavior through resonance effects that cannot be
decomposed into pairwise bond properties. A model that smooths predictions based
only on pairwise atomic connections would miss these collective effects,
potentially leading to systematic errors for aromatic compounds. Similarly, in
computer vision tasks on mesh data, the curvature at a vertex, determined by how
surrounding triangles meet, often correlates strongly with semantic boundaries.
These examples demonstrate that multi-way geometric relationships contain
essential information that pairwise methods cannot capture.

This section develops refinement techniques that exploit the full geometric
structure of partition models. We extend classical smoothing operators to
respect not just adjacency but the complete simplicial geometry, incorporating
information from triangles, tetrahedra, and higher-order structures. The key
insight is that smoothing should respect not just which cells are adjacent, but
how they meet in space.

We construct the extended Laplacian through a careful lifting procedure. Given a
function $u$ defined on vertices of our simplicial complex $K$, we must first
extend it to higher-dimensional simplices before applying appropriate smoothness
operators. The lifting process reveals a deep connection between discrete and
continuous mathematics.

The Whitney form method provides an elegant approach to lifting vertex functions
to higher-dimensional simplices. For each p-simplex $\sigma = [v_0, \ldots, v_p]$,
Whitney introduced a canonical p-form
$$\omega_\sigma = p! \sum_{i=0}^p (-1)^i \lambda_i \, d\lambda_0 \wedge \cdots \wedge \widehat{d\lambda_i} \wedge \cdots \wedge d\lambda_p$$
where $(\lambda_0, \ldots, \lambda_p)$ are barycentric coordinates (satisfying
$\sum_i \lambda_i = 1$ and $\lambda_i \geq 0$) and the hat denotes omission.
This form has the remarkable property that
$$\int_\tau \omega_\sigma = \begin{cases} 1 & \text{if } \tau = \sigma \\ 0 & \text{if } \tau \neq \sigma \end{cases}$$

To lift a vertex function $u$ to a p-cochain using Whitney forms, we evaluate:
$$\kappa_p^{\text{Whitney}}(u)(\sigma) = \sum_{i=0}^p u(v_i) \int_\sigma \lambda_i \, d\lambda_0 \wedge \cdots \wedge \widehat{d\lambda_i} \wedge \cdots \wedge d\lambda_p$$
This formula interpolates the vertex values $u(v_i)$ using integrals of the
barycentric coordinates against the appropriate volume forms.

An alternative lifting approach uses harmonic extension, seeking the function on
each simplex that minimizes the Dirichlet energy while matching prescribed
vertex values. For each p-simplex $\sigma$, we solve Laplace's equation
$\Delta H = 0$ with the vertex values of $u$ as boundary conditions. Among all
functions with these boundary values, the harmonic function has the smallest
integrated squared gradient $\int_\sigma \|\nabla H\|^2$, making it "smoothest"
in this specific energy-minimizing sense.

Remarkably, both the Whitney extension and harmonic extension yield the same
result: the lifted value on a p-simplex $\sigma$ is the barycentric average of
the vertex values, weighted by the p-dimensional volume of the simplex:
$$\kappa_p(u)(\sigma) = \frac{\text{vol}_p(\sigma)}{p+1} \sum_{i=0}^p u(v_i)$$
This formula shows that higher-dimensional simplices naturally encode averaged
information from their vertices (see Appendix~A for the complete derivation).

With lifting operators in hand, we can now define the extended Laplacian. Let
$L_p$ denote the Hodge Laplacian acting on p-cochains, defined through
$$L_p = \delta_{p-1}\delta_{p-1}^* + \delta_p^*\delta_p$$
where $\delta$ is the coboundary operator.

The extended Laplacian combines information from different dimensions:
$$L_{\text{ext}} u = \underbrace{L_0 u}_{\text{pairwise smoothing}} +
\underbrace{\alpha_1 \kappa_1^* L_1 \kappa_1 u}_{\text{loop consistency}} +
\underbrace{\alpha_2 \kappa_2^* L_2 \kappa_2 u}_{\text{triple interactions}} + \cdots$$
Each term enforces smoothness at a different scale:
\begin{itemize}[label=-, topsep=-5pt, partopsep=0pt, parsep=0pt, itemsep=0pt, after=\vspace{0.2cm}]
  \item $L_0 u$: Standard graph Laplacian smoothing along edges
  \item $\kappa_1^* L_1 \kappa_1 u$: Ensures gradients are consistent around loops
  \item $\kappa_2^* L_2 \kappa_2 u$: Regularizes curvature-like quantities
\end{itemize}
In general, the term $\kappa_p^* L_p \kappa_p$ incorporates p-dimensional
smoothness information. The operator $\kappa_p$ lifts vertex functions to
p-cochains using the Whitney/harmonic method, $L_p$ measures roughness in that
dimension, and $\kappa_p^*$ projects back to vertices. The parameters
$\alpha_p > 0$ control the relative influence of each dimension.

The extended Laplacian addresses the motivating problem by incorporating
multi-way cell interactions. When three cells meet at unusual angles, the
2-dimensional term $\kappa_2^* L_2 \kappa_2$ detects this through the geometry
encoded in the triangle they form. The resulting smoothing respects not just
adjacency but the full configuration of how cells meet.

A complementary variational approach emerges through simplicial splines. Given
observations $y = (y_1, \ldots, y_n)^T$ at vertices, we seek a function
minimizing the energy functional:
$$\mathcal{E}[u] = \underbrace{\|y - u\|^2\vphantom{\sum_{p=0}^{d}}}_{\text{data fidelity}} + \underbrace{\sum_{p=0}^{d}\sum_{k=1}^{K_p} \lambda_{pk} \langle \kappa_p u, L_p^{k} \kappa_p u \rangle_p}_{\text{multi-scale regularization}}$$

This functional balances fidelity to observed data against smoothness
constraints operating at multiple geometric scales. The regularization term
decomposes into contributions from different dimensions, each capturing distinct
aspects of smoothness. At the lowest dimension ($p=0$), the penalty directly
constrains differences between vertex values, encouraging nearby vertices to
have similar function values. The one-dimensional component ($p=1$) penalizes
variation along edges, analogous to controlling first derivatives in the
continuous setting. This term ensures that the discrete gradient field varies
smoothly throughout the complex.

Higher-dimensional terms encode increasingly sophisticated notions of
regularity. The two-dimensional component ($p=2$) measures variation across
triangles, capturing a discrete analogue of curvature by detecting how gradients
twist around vertices where multiple cells meet. In a concrete example of three
cells meeting at a vertex in a 2D mesh, this term identifies and penalizes
configurations where the gradients rotate sharply, preferring arrangements where
the function varies coherently across the junction. As the dimension $p$
increases, the regularization captures progressively more subtle patterns of
variation, with each term
$\lambda_{pk} \langle \kappa_p u, L_p^{k} \kappa_p u \rangle_p$ measuring
inconsistencies in how the function behaves around $(p+1)$-way intersections of
cells.

The parameters $\lambda_{pk}$ provide fine-grained control over the
regularization, allowing different weights for different dimensions $p$ and
different powers $k$ of the Laplacian operator. The use of powers $L_p^k$
enables control over higher-order smoothness within each dimension, analogous to
penalizing higher derivatives in classical spline theory. This multi-scale
framework naturally adapts to the geometric structure of the partition, applying
appropriate smoothness constraints based on how cells meet and interact within
the simplicial complex.

The minimizer of this functional satisfies the linear system:
$$\left(I + \sum_{p,k} \lambda_{pk} \kappa_p^* W_p L_p^{k-1} \kappa_p\right) u = y$$
This system, derived by setting the Fréchet derivative of $\mathcal{E}$ to zero,
has an intuitive structure despite its complex appearance. The operator on the
left decomposes as:
$$
\underbrace{I\vphantom{\sum_{p,k}}}_{\text{identity}} + \underbrace{\sum_{p,k} \lambda_{pk} \kappa_p^* W_p L_p^{k-1} \kappa_p}_{\text{weighted smoothing operators}}
$$
Each term in the sum implements a specific type of smoothing: the operator
$\kappa_p$ lifts the vertex function to p-dimensional structures, $L_p^{k-1}$
applies $(k-1)$-fold smoothing at that dimension, $W_p$ incorporates the
Riemannian metric weights, and $\kappa_p^*$ projects the smoothed result back to
vertices. The matrix $W_p$ encodes the full geometric structure of the complex.
For $p=1$, this includes not just edge lengths but also the inner products
between edges meeting at vertices, which involve both boundary areas and
dihedral angles. For $p=2$ and higher, $W_p$ similarly captures volumes and
relative orientations of higher-dimensional simplices. This rich geometric
encoding ensures that the smoothing respects not just the sizes of cells but
also their angular relationships in space.

To understand this concretely, consider the simplest case with only first-order edge smoothing ($p=1$, $k=1$):
$$\left(I + \lambda_{10} \kappa_1^* W_1 \kappa_1\right) u = y$$
Here, $\kappa_1 u$ computes discrete gradients on edges. The matrix $W_1$
incorporates the full Riemannian structure: diagonal entries contain edge
weights (derived from boundary areas), while off-diagonal entries encode angular
relationships through terms like
$$
\sqrt{\text{vol}_{n-1}(F_{ij}) \cdot \text{vol}_{n-1}(F_{ik})} \cos\theta_{jk}.
$$
The operator $\kappa_1^*$ then aggregates these geometrically-weighted gradients
back to vertices.

The higher-order terms ($L_p^{k-1}$ with $k > 1$) enable control over smoothness
derivatives. Just as classical splines can penalize not just function values but
also derivatives and higher-order variations, these terms penalize higher-order
roughness at each dimensional scale. The key advantage of this formulation is
that despite the sophisticated multi-scale smoothing it encodes, the final
system remains linear in the vertex values $u$, enabling efficient solution
using standard sparse linear solvers. The complexity in the formula directly
addresses our motivating challenge: incorporating geometric information beyond
simple adjacency to achieve smoothing that respects how cells meet in space.

Returning to our example of three cells meeting at a vertex, the extended
Laplacian and simplicial splines now provide geometrically-aware smoothing. If
the cells meet symmetrically, the smoothing preserves this symmetry. If two
cells form a narrow wedge against a third, the smoothing respects this
anisotropy. The higher-dimensional terms automatically adapt to the local
geometry, providing smoothing that graph-based methods cannot achieve.

These constructions demonstrate that smoothing operators can be systematically
extended to incorporate higher-order geometric relationships. The lifting
procedure provides a concrete mechanism for moving from vertex functions to
higher-dimensional structures where multi-way interactions become accessible to
standard differential operators. While the mathematical framework requires
careful development, the resulting operators admit efficient implementation
through linear systems on vertex functions.

The extended Laplacian and simplicial spline frameworks developed here offer one
approach to incorporating geometric information in the refinement of partition
models. The empirical validation of these refinement techniques across diverse
domains constitutes an important direction for future research. This geometric
perspective provides an alternative view where generalization might be related
to the geometric structures induced by partition models rather than solely to
model complexity.

\section*{2.4 Toward Density-Weighted Geometry and Curvature Perspectives}

Consider two decision trees trained on different datasets that happen to create
geometrically identical partitions of the ambient feature space. Both achieve
similar training accuracy, both induce identical simplicial complexes, and both
have identical Riemannian metrics encoding boundary areas and dihedral angles.
Yet one generalizes well to test data while the other overfits significantly.
What distinguishes them?

One possible explanation involves how the geometry aligns with the underlying
data distribution. A partition cell containing thousands of training points
carries different statistical weight than an equally-sized cell containing just
a few points. A boundary cutting through a dense cluster of data points has
different implications than one passing through empty space. The Riemannian
structure we have developed, while capturing rich geometric information, does
not explicitly account for this statistical structure.

This limitation becomes evident when analyzing failure modes of machine learning
models. Adversarial examples may exploit regions of low data density where
models must extrapolate. Overfitting can manifest as excessive geometric
complexity in sparsely populated regions. Transfer learning performance may
depend on whether source and target domains share similar density patterns along
with geometric features. These observations suggest that understanding machine
learning models may benefit from integrating geometric and statistical
perspectives.

How might we enrich our geometric framework to incorporate data density? One
natural approach draws inspiration from physics, where density-weighted metrics
appear in general relativity \cite{wald1984general} and fluid dynamics
\cite{arnold1966geometry}. In these contexts, the presence of matter or energy
warps the underlying geometry, creating a unified geometric-physical structure.
Similarly, we can modify our Riemannian metric on simplicial complexes to
reflect data density, making paths through high-density regions effectively
"shorter" than those through sparse regions.

This density weighting modifies the geometric analysis. Shortest paths now
follow data concentrations rather than geometric geodesics. The notion of
distance becomes statistical as well as geometric, opening the possibility of
defining curvature-like measures that capture how models adapt to sparse,
structured data.

This section develops these ideas systematically. We begin by incorporating
statistical structure through density-weighted metrics (Section 2.4.1). We then
develop discrete curvature measures that quantify how cell arrangements create
effective curvature (Section 2.4.2). Building on these foundations, we introduce
statistical Ricci curvature for edges of the simplicial complex, enabling a
Taylor-like regularization framework that penalizes both local vertex
irregularities and relational edge irregularities (Section 2.4.3). Finally, we
examine how these geometric structures evolve during training, providing
diagnostic tools for learning algorithms (Section 2.4.4).

The framework presented here is exploratory, offering new perspectives on
familiar phenomena. By revealing the geometry that emerges from the interaction
between sparse data and model partitions, we aim to contribute to the
understanding of generalization, robustness, and learning dynamics in machine
learning models. The geometric tools we develop raise intriguing questions:
Might different architectures induce characteristic curvature patterns that
explain their inductive biases and failure modes? How do geometric structures
evolve during training, and can this evolution predict generalization
performance? While these questions remain open, they suggest potentially
fruitful connections between discrete geometry and machine learning.

\subsubsection*{2.4.1 Incorporating Statistical Structure}

To incorporate data density into our geometric framework, we must first estimate
how data distributes across the simplicial complex. Each cell $C_i$ contains
some number of training points $n_i = |\{x_j : x_j \in C_i\}|$. These raw counts
provide a discrete, piecewise constant density estimate:
$\tilde{\rho}_i = n_i/\text{vol}(C_i)$. However, such estimates are unstable in
small cells and undefined in cells containing no data. We need a principled
approach to smooth these estimates while respecting the geometric structure. We
first show how to estimate density at vertices through a regularized
optimization that balances data fidelity with smoothness constraints. With
vertex densities established, we then face the challenge of extending these
values to edges and higher-dimensional simplices. Simply averaging vertex
densities ignores the rich geometric information encoded in our Riemannian
structure. We explore several principled alternatives that incorporate geometric
correction factors, each with different properties suited to particular
applications.

The simplicial spline framework from Section 2.3 offers one possible approach.
We seek a density function $\rho: V \to \mathbb{R}_+$ that minimizes:
$$\mathcal{E}[\rho] = \sum_{i \in V} \frac{(n_i - \text{vol}(C_i) \cdot \rho(v_i))^2}{\text{vol}(C_i)} + \lambda \sum_{p=0}^{d}\sum_{k=1}^{K_p} \langle \kappa_p \rho, L_p^{k} \kappa_p \rho \rangle_p$$
The first term ensures that density times volume approximates the observed
counts, while the regularization term promotes smoothness across the simplicial
complex. The weighting by $1/\text{vol}(C_i)$ gives equal importance to cells of
different sizes, preventing large cells from dominating the optimization. This
formulation naturally handles the common situation where some cells contain no
training points, interpolating density from neighboring cells through the
geometric structure.

The solution to this variational problem satisfies a linear system analogous to
that for simplicial splines:
$$\left(D + \lambda \sum_{p,k} \kappa_p^* W_p L_p^{k-1} \kappa_p\right) \rho = Dn$$
where $D$ is diagonal with $D_{ii} = 1/\text{vol}(C_i)$ and $n$ is the vector of
point counts. The multi-scale regularization through powers of the Hodge
Laplacian ensures that density estimates vary smoothly while respecting the
simplicial structure.

With vertex densities estimated, we must extend these values to edges and
higher-dimensional simplices. The simplest approach would average vertex
densities:
$$
\rho(e) = \frac{\rho(v_i) + \rho(v_j)}{2}
$$
for an edge $e = v_i \wedge v_j$. However, this ignores the geometric
information encoded in our Riemannian structure. We propose incorporating a
geometric correction factor:
$$\rho(e) = \frac{\rho(v_i) + \rho(v_j)}{2} \cdot \omega(e)$$
where
$$\omega(e) = \frac{\langle e, e \rangle^{1/2}}{\text{vol}_{n-1}(F_{ij})^{1/(n-1)}}$$

This correction factor is motivated by considering how edge lengths relate to
boundary geometry. The numerator $\langle e, e \rangle^{1/2}$ is the edge length
in our Riemannian structure, which by the compatibility conditions is the same
whether computed at vertex $v_i$ or $v_j$. The denominator
$\text{vol}_{n-1}(F_{ij})^{1/(n-1)}$ represents an "expected" length based on
the boundary area between cells. In regular geometries, when a polytope is
scaled by factor $\lambda$, edge lengths scale by $\lambda$ while
$(n-1)$-dimensional boundaries scale by $\lambda^{n-1}$, so edge length scales
as the $(n-1)$-th root of boundary volume. The ratio $\omega(e)$ thus compares
the actual edge length to this expected scale.

When $\omega(e) > 1$, the edge is effectively "stretched" in the Riemannian
metric relative to its boundary size, suggesting this connection carries more
geometric significance. We accordingly increase the density estimate along this
edge. Conversely, when $\omega(e) < 1$, the edge is "compressed," indicating
less geometric importance, and we decrease the density estimate. This approach
aims to incorporate both statistical and geometric structure in the density
interpolation.

The choice of this particular extension deserves scrutiny, as several
mathematically principled alternatives exist. The formula above uses arithmetic
averaging with a geometric correction factor, but other approaches may prove
more suitable depending on the application: The harmonic interpolation approach
would define edge density using the harmonic mean:
$$
\rho_{\text{harm}}(e) = \frac{2\rho(v_i)\rho(v_j)}{\rho(v_i) + \rho(v_j)} \cdot \omega(e)
$$
The harmonic mean naturally appears when considering conductances in parallel.
Recall that when two resistors with resistances $R_1$ and $R_2$ are connected in
parallel, the combined resistance $R$ is:
$$\frac{1}{R} = \frac{1}{R_1} + \frac{1}{R_2}.$$
Converting to conductances $G = 1/R$, this becomes simply:
$$G = G_1 + G_2$$
So conductances add in parallel. The combined resistance is:
$$R = \frac{R_1 R_2}{R_1 + R_2} = \frac{1}{\frac{1}{R_1} + \frac{1}{R_2}}.$$
This is half the harmonic mean of $R_1$ and $R_2$. This connection makes the
harmonic mean natural for resistor network analogies \cite{tetali1991random}.
This choice tends to be more sensitive to low-density regions, making paths
avoid sparse areas more strongly.

Alternatively, we could use the lifting operator framework more directly. Since
$\kappa_1(\rho)$ lifts the density function to edges, we might define:
$$
\rho_{\text{lift}}(e) = \kappa_1(\rho)(e) = \frac{\text{vol}_{n-1}(F_{ij})}{\text{vol}(C_i) + \text{vol}(C_j)} \cdot (\rho(v_i) + \rho(v_j))
$$
This approach weights the contribution of each vertex by its cell's relative
size, naturally handling cases where adjacent cells have very different volumes.

A geometric mean approach suggests:
$$
\rho_{\text{geom}}(e) = \sqrt{\rho(v_i) \cdot \rho(v_j)} \cdot \omega(e)
$$
This formulation has the advantage of being scale-invariant under simultaneous
scaling of densities, properties that may be desirable in certain contexts.

Each approach has different mathematical properties that may be relevant for the
resulting density-weighted metric. The arithmetic mean treats densities
linearly, the harmonic mean gives more weight to smaller values (as seen in the
resistor network analogy), the lifting-based approach explicitly incorporates
cell volumes into the weighting, and the geometric mean preserves multiplicative
relationships between densities. The appropriate choice likely depends on the
specific machine learning context and the phenomena being studied. This
multiplicity of mathematically principled definitions underscores the
exploratory nature of this framework and invites empirical investigation to
determine which formulation best captures the relationship between data density
and model behavior.

The incorporation of density fundamentally alters our notion of distance on the
simplicial complex. Drawing inspiration from optimal transport theory
\cite{villani2003topics, peyre2019computational} and information geometry
\cite{amari2016information}, we define a density-weighted metric:
$$d_\rho(e) = \frac{\ell(e)}{\rho(e)^\alpha}$$
where $\ell(e)$ is the original Riemannian length of edge $e$ and $\alpha \in (0,1]$ controls the strength of density weighting. This modification makes paths through high-density regions effectively shorter than those through sparse regions.

This modification of the metric has significant geometric consequences. Shortest
paths no longer follow geometric geodesics but instead curve toward data
concentrations. The distance between two vertices now depends not just on their
geometric separation but on the density of data along connecting paths. This
aligns with the intuition that model behavior in dense regions is more reliable
and should influence distant predictions more strongly than sparse regions.

Consider how this density weighting affects the analysis of our earlier example.
The boundary between cells $C_1$ (with 1,000 points) and $C_2$ (with 10 points)
now carries different weight than a boundary between two dense cells. Paths that
venture into the sparse cell $C_2$ become longer under the density-weighted
metric, reflecting the increased uncertainty in that region. The simplicial
complex retains its combinatorial structure, but the effective geometry warps to
reflect statistical confidence.

This density-weighted geometric framework provides the foundation for studying
how machine learning models adapt to data distributions. By enriching the
Riemannian structure with statistical information, we create a unified
geometric-statistical object that captures both the partition structure and its
relationship to data. The stage is now set for exploring how this enriched
geometry might exhibit forms of curvature that explain model behavior, a
possibility we explore in the following sections.

\subsubsection*{2.4.2 Vertex-based Curvatures for ML Models}

The density-weighted metric introduced in Section 2.4.1 opens the possibility of
defining curvature-like measures for machine learning models. Yet this
possibility immediately confronts us with a paradox: the partitions created by
these models divide Euclidean space into convex polytopes, each inheriting the
flat geometry of the ambient space. There is no intrinsic curvature in the
classical sense. Paths within cells follow straight lines, and parallel
transport around any loop returns vectors unchanged. How then can we speak
meaningfully of curvature?

The resolution lies in recognizing that real-world data, despite living in
high-dimensional Euclidean space, is typically sparse and highly structured.
This sparsity and non-uniform distribution effectively creates geometric
curvature through the arrangement of partition cells. When data concentrates in
specific regions or along lower-dimensional structures while leaving vast
regions empty, machine learning models must adapt their partitions to this
inhomogeneous structure. The resulting configuration of cells exhibits
curvature-like properties that emerge not from the geometry within cells, but
from how the cells are arranged and connected, much like how a geodesic dome
exhibits global curvature despite being built from flat triangular panels. We
seek to explore whether these curvature-like properties relate to model
behavior, generalization, and robustness through the curvature measures
developed below.

\paragraph{Data Geometry Based Vertex Curvatures}

We begin by developing curvature measures that capture how the arrangement of
cells responds to data density. These measures exploit the density-weighted
metric introduced in Section 2.4.1 to quantify various geometric phenomena at
vertices: how quickly neighborhoods grow, how geodesics spread, and how paths
navigate through the cell structure. Each measure captures a different aspect of
the effective geometry created by sparse data.

The most direct approach examines the growth rate of density-weighted balls.
Under the density-weighted metric, the distance between points depends on data
density along connecting paths. For a vertex $v$ in our simplicial complex,
define the density-weighted ball of radius $r$:
$$\mathcal{B}_r^\rho(v) = \{w \in V : d_\rho^{\text{geodesic}}(v,w) \leq r\}$$
where $d_\rho^{\text{geodesic}}$ is the shortest path distance using the
density-weighted metric.

In flat space with uniform density, we expect $|\mathcal{B}_r^\rho(v)|$ to grow
as $r^n$. Deviations from this growth rate indicate curvature. Define the
discrete ball-growth curvature at scale $r$ as:
$$\kappa_r^{\text{ball}}(v) = \frac{N_r(v) - N_r^{\text{flat}}(v)}{N_r^{\text{flat}}(v)}$$
where $N_r(v) = |\mathcal{B}_r^\rho(v)|$ is the actual number of vertices in the ball and $N_r^{\text{flat}}(v) = \rho(v) \cdot V_n \cdot r^n$ is the expected number in flat space with uniform density $\rho(v)$, where $V_n$ is the volume of the unit ball in $\mathbb{R}^n$.

This definition exhibits clear geometric meaning. When
$\kappa_r^{\text{ball}}(v) > 0$, the ball contains more vertices than expected,
suggesting positive curvature where geodesics emanating from $v$ converge back
together. When $\kappa_r^{\text{ball}}(v) < 0$, geodesics diverge more rapidly
than in flat space, indicating negative curvature. The scale parameter $r$
allows us to probe curvature at different resolutions, from local perturbations
to global structure.

A different approach to local curvature examines distance deviations in the
immediate neighborhood. Define the distance-deviation curvature as:
$$\kappa^{\text{dist}}(v) = 1 - \frac{\sum_{w \in \partial \overline{\text{Star}}(v)} d_\rho(v,w)}{|\partial \overline{\text{Star}}(v)| \cdot \bar{d}_\rho}$$
where $\partial \overline{\text{Star}}(v)$ denotes the set of vertices adjacent
to $v$ and $|\partial \overline{\text{Star}}(v)|$ is the number of such vertices
(the degree of $v$) and $\bar{d}_\rho$ is a characteristic distance scale in the
complex. This measures whether neighbors are systematically closer (positive
curvature) or farther (negative curvature) than typical.

Another geometric approach examines how geodesics spread as they emanate from a
vertex, adapting the notion of Jacobi fields to the discrete setting. In smooth
Riemannian geometry, the spreading of geodesics is governed by the Jacobi
equation, with positive curvature causing convergence and negative curvature
causing divergence \cite{do1992riemannian, jost2011riemannian}.

For our discrete setting, we introduce geodesic spray analysis to measure this
spreading. Consider all vertices at density-weighted distance $r$ from vertex
$v$, forming the discrete sphere:
$$\mathcal{S}_r^\rho(v) = \{w : d_\rho^{\text{geodesic}}(v,w) = r\}$$
For each $w \in \mathcal{S}_r^\rho(v)$, let $\gamma_{vw}$ be a shortest path
from $v$ to $w$. We measure geodesic spreading through two complementary
approaches.

The first measures angular spread at the source by examining initial directions:
$$\Theta_r^{\text{angle}}(v) = \frac{1}{|\mathcal{S}_r^\rho(v)|^2} \sum_{w,w' \in \mathcal{S}_r^\rho(v)} \angle_v(w,w')$$
where $\angle_v(w,w')$ is the angle between the first edges of paths
$\gamma_{vw}$ and $\gamma_{vw'}$, computed using the Riemannian inner product:
$$
\cos(\angle_v(w,w')) = \frac{\langle e_w, e_{w'} \rangle_v}{\|e_w\|_v \|e_{w'}\|_v}
$$

The second, more global measure examines how far apart geodesics end up after traveling distance $r$:
$$\Theta_r^{\text{spread}}(v) = \frac{\sum_{w,w' \in \mathcal{S}_r^\rho(v)} d_\rho^{\text{geodesic}}(w,w')}{|\mathcal{S}_r^\rho(v)|^2 \cdot 2r}$$

To convert these spreading measures to curvatures, we compare to expected values in flat space. Define the geodesic spreading curvature as:
$$\kappa_r^{\text{spray}}(v) = 1 - \Theta_r^{\text{spread}}(v)$$
When $\kappa_r^{\text{spray}}(v) > 0$, geodesics converge (positive curvature). When $\kappa_r^{\text{spray}}(v) < 0$, geodesics diverge more than expected (negative curvature).

A different local curvature measure examines triangular relationships in the immediate neighborhood. Define the triangle-based curvature as:
$$\kappa^{\text{tri}}(v) = \frac{1}{|\partial \overline{\text{Star}}(v)|^2} \sum_{w,w' \in \partial \overline{\text{Star}}(v)} \left(1 - \frac{d_\rho(w,w')}{d_\rho(v,w) + d_\rho(v,w')}\right)$$
This measures whether the triangle inequality is tight (zero curvature) or loose (positive curvature indicates neighbors are closer than the sum of distances through $v$).

While these spreading measures capture how geodesics diverge or converge, they
do not reveal the complexity of the paths themselves. A third approach examines
how geodesics navigate through the partition structure.

For vertices at distance $r$ from $v$, define the path complexity measure:
$$J_r(v) = \frac{1}{|\mathcal{S}_r^\rho(v)|} \sum_{w \in \mathcal{S}_r^\rho(v)} \frac{|\gamma_{vw}|_{\text{cells}}}{r}$$
where $|\gamma_{vw}|_{\text{cells}}$ counts the number of distinct cells the
path traverses. In flat space with uniform density, we expect paths to traverse
a number of cells proportional to their length, giving
$J_r(v) \approx 1/\bar{\ell}$ where $\bar{\ell}$ is the average cell diameter.

Define the path complexity curvature as:
$$\kappa_r^{\text{path}}(v) = J_r(v) \cdot \bar{\ell} - 1$$
When
$\kappa_r^{\text{path}}(v) > 0$, geodesics wind through more cells than
expected, indicating complex local structure or fragmented partitions. When
$\kappa_r^{\text{path}}(v) < 0$, paths traverse fewer cells, suggesting larger,
more regular cells in that region.

\paragraph{Functional Curvature at Vertices}

While the data based curvatures capture geometric structure of the underlying
partitions, machine learning models are ultimately function approximators and
understanding their behavior requires not just geometric but also functional
analysis. How can we extend curvature concepts to the functions defined on our
partition complexes?

Traditional regularization approaches penalize gradient magnitudes through L1 or
L2 norms, but these measures are inherently positive and miss the fundamental
signed nature of curvature. A function that oscillates wildly between convex and
concave behavior differs qualitatively from one that maintains consistent
convexity, yet gradient norms alone cannot distinguish these cases. This
limitation motivates the development of functional curvature measures that
capture both the magnitude and sign of local function behavior.

We propose several complementary approaches to functional curvature, each
capturing different aspects of how functions behave on simplicial complexes. The
most direct approach adapts the classical mean curvature through the graph
Laplacian. For a function $f$ defined on vertices, the discrete mean curvature
at vertex $v$ takes the form
$$\kappa_f^{\text{mean}}(v) = \frac{\Delta f(v)}{f(v) - \bar{f}_v}$$
where
$$
\Delta f(v) = \sum_{w \sim v} w_{vw}(f(w) - f(v))
$$
is the weighted graph Laplacian and
$$
\bar{f}_v = \frac{\sum_{w \sim v} w_{vw}f(w)}{\sum_{w \sim v} w_{vw}}
$$
represents the weighted average of $f$ over neighbors. This ratio captures
whether $f$ is locally convex (positive curvature) or concave (negative
curvature) relative to its neighborhood average.

A more geometric approach examines gradient variations around vertices through
an angle-based measure. For a vertex $v$, we define:
$$\kappa_f^{\text{angle}}(v) = \frac{1}{|E_v|(|E_v|-1)} \sum_{e_i, e_j \in E_v, i \neq j} \log\left(\frac{1 + \cos\angle(\vec{\nabla} f|_{e_i}, \vec{\nabla} f|_{e_j})}{1 - \cos\angle(\vec{\nabla} f|_{e_i}, \vec{\nabla} f|_{e_j})}\right)$$
where $E_v$ is the set of edges incident to $v$. For edge $e_i = v \wedge w_i$,
we define the gradient vector of $f$ along $e_i$ as:
$$\vec{\nabla} f|_{e_i} = \frac{f(w_i) - f(v)}{\ell(e_i)^2} \cdot e_i$$
where we scale by $\ell(e_i)^2$ to account for the edge having length $\ell(e_i)$. The cosine of the angle between gradient vectors is:
$$\cos\angle(\vec{\nabla} f|_{e_i}, \vec{\nabla} f|_{e_j}) = \text{sign}((f(w_i) - f(v))(f(w_j) - f(v))) \cdot \frac{\langle e_i, e_j \rangle_v}{\ell(e_i) \cdot \ell(e_j)}$$
This measure ranges from $-\infty$ to $+\infty$, with values near zero
indicating moderately varied gradient directions typical of regular points.
Large positive values occur when gradients are strongly aligned, suggesting a
local extremum where the function consistently increases or decreases in all
directions. Large negative values indicate opposing gradients characteristic of
saddle points, where the function increases in some directions while decreasing
in others. The logarithmic formulation amplifies the distinction between aligned
and opposing gradients, making it particularly sensitive to detecting extrema
and saddle configurations.

For functions with sufficient regularity, we can examine the geometry of level
sets through the discrete structure of the simplicial complex. The level set
curvature at vertex $v$ measures how the function deviates from linear behavior
by examining the variance of directional derivatives:
$$\kappa_f^{\text{level}}(v) = \frac{1}{|\partial \overline{\text{Star}}(v)|} \sum_{w \in \partial \overline{\text{Star}}(v)} \left(\frac{f(w) - f(v)}{d(v,w)} - \langle \nabla f \rangle_v \right)^2$$
where $\partial \overline{\text{Star}}(v)$ denotes the vertices in the star of $v$
excluding $v$ itself, $d(v,w)$ is the Riemannian distance between vertices, and
$$
\langle \nabla f \rangle_v = \frac{1}{|\partial \text{Star}(v)|} \sum_{w \in \partial \text{Star}(v)} \frac{f(w) - f(v)}{d(v,w)}
$$
is the average directional derivative. This measure captures how much the
function's rate of change varies in different directions emanating from $v$.
When $\kappa_f^{\text{level}}(v) = 0$, the function behaves linearly in the star
of $v$, while large values indicate that level sets bend significantly,
suggesting local nonlinearity or saddle-like behavior in the function landscape.

\paragraph{Vertex Curvatures as Aggregations}

A crucial insight emerges when we examine how these vertex curvatures relate to
edge-based information. The discrete mean curvature explicitly aggregates
differences along edges through the Laplacian operator. The angle-based
curvature aggregates angular information from pairs of edges. The level set
curvature averages squared deviations of edge-based gradients.

This observation reveals that vertex curvatures, while providing valuable local
diagnostics, inherently average away edge-specific information. A vertex might
exhibit zero mean curvature because all neighboring values equal the average, or
because positive and negative deviations cancel. The aggregation process loses
information about which specific edges contribute to the curvature and how.

This limitation becomes particularly relevant when designing regularization
strategies. Penalizing only vertex curvatures might miss pathological
configurations where problems concentrate along specific edges or boundaries.
Two models might have identical vertex curvature distributions while differing
dramatically in their edge-level behavior.

\paragraph{Connecting to Model Phenomena}

These discrete curvature measures may connect to machine learning phenomena in
several ways. We hypothesize that regions of positive geometric curvature, where
density-weighted balls grow faster than expected, could correspond to stable
predictions where the model's behavior remains consistent across neighborhoods.
Such regions might contain ample training data distributed uniformly,
potentially allowing the model to learn robust patterns.

Similarly, negative geometric curvature regions, where geodesics diverge
rapidly, might coincide with decision boundaries or areas of high model
variance. The diverging geodesics could reflect how small perturbations lead to
different predictions, which would be characteristic of regions where the model
lacks confidence.

The functional curvatures suggest additional connections. High positive mean
curvature might indicate local overfitting, where the model's predictions form
sharp peaks. Negative mean curvature could suggest underfitting, where the model
fails to capture local patterns. Non-zero angle-based curvature might identify
saddle points that could affect optimization or create vulnerabilities.

These potential connections between curvature measures and model behavior
motivate further empirical investigation. The scale-dependence of these measures
could prove particularly valuable for practical application. A vertex might
exhibit positive curvature at small scales, suggesting local stability, while
displaying negative curvature at larger scales, indicating its position within a
complex global structure. This multi-scale analysis helps distinguish between
local irregularities that might be smoothed and fundamental geometric features
that should be preserved.

Computing these discrete curvature measures requires only local calculations on
the simplicial complex, making them practical even for large models. The
vertex-based nature allows parallel computation and efficient updates as the
model evolves during training. These computational advantages, combined with
their diagnostic value, make discrete curvatures valuable tools for
understanding and improving machine learning models.

The framework developed in this section provides the foundation for more
sophisticated geometric analysis. While vertex curvatures offer important local
diagnostics, they aggregate edge-level information in ways that may obscure
important relational structure. This observation motivates the development of
edge-based curvature measures that can capture the fine-grained geometric
relationships between model regions, a topic we explore in the following
section.

\subsubsection*{2.4.3 Statistical Ricci Curvature and Geometric Regularization}

The vertex-based curvatures developed in Section 2.4.2 provide valuable local
diagnostics but inherently aggregate information from incident edges. This
aggregation, while useful for understanding local behavior, obscures the
fine-grained relational structure between cells that often determines model
performance. Just as Taylor series expansions capture function behavior through
derivatives of increasing order, we can construct regularization frameworks that
incorporate geometric information at multiple scales of the simplicial complex.

Consider a general geometric regularization functional that expands over the
dimensions of our simplicial complex:
$$R(f) = \sum_{p=0}^{\dim K} \lambda_p \sum_{\sigma \in K_p} \psi_p(\kappa_f^{\text{stat},p}(\sigma)),$$
where $\kappa_f^{\text{stat},p}(\sigma)$ represents a statistical curvature
measure for a $p$-simplex $\sigma$, and $\psi_p$ are penalty functions chosen
based on the desired properties at each dimension. The $p=0$ term captures
vertex-based curvatures as developed in Section 2.4.2. Higher-order terms
incorporate increasingly sophisticated relational information: edges ($p=1$)
capture pairwise relationships between cells, triangles ($p=2$) encode three-way
interactions, and so forth.

This Taylor-like expansion reveals a fundamental principle for geometric
regularization. Just as truncating a Taylor series at first order often provides
adequate approximation for smooth functions, considering only vertices and edges
may suffice for many machine learning applications. The vertex term ensures
local regularity, while the edge term captures the most important relational
structure. Higher-order terms become relevant when complex multi-way
interactions significantly impact model behavior.

For practical implementation, we focus on developing the $p=1$ (edge) term,
which provides the most important extension beyond vertex-based measures. This
leads to the regularization framework:
$$R(f) = \sum_{v \in V} \phi(|\kappa^{\text{stat}}_f(v)|) + \lambda \sum_{(v,w) \in E} \psi((\kappa_f^{\text{stat}}(v,w))^2)$$
The remainder of this section develops the edge-based statistical curvature
$\kappa_f^{\text{stat}}(v,w)$, showing how it captures relational properties
that vertex measures miss.

\paragraph{Edge-Based Statistical Curvature}

Following the decomposition principle established for vertex curvatures, we
define statistical Ricci curvature for an edge $e = v \wedge w$ as:
$$\text{Ric}_{\text{stat}}(v,w) = \text{Ric}_{\text{geom}}(v,w) + \text{Ric}_{\text{dens}}(v,w) + \text{Ric}_{\text{func}}(v,w)$$

This decomposition maintains the philosophy that statistical structure emerges
from the interplay of geometric configuration, data distribution, and functional
behavior, now applied to measuring relationships between cells rather than local
properties.

\paragraph{Geometric Ricci Curvature}

The geometric component adapts Ollivier's coarse Ricci curvature to our discrete
setting. This elegant construction measures how neighborhoods of adjacent
vertices relate to each other through optimal transport. For vertices $v$ and
$w$, we construct probability measures $\mu_v$ and $\mu_w$ supported on their
respective neighbors, with weights proportional to edge lengths in the
Riemannian metric:
$$\mu_v(u) = \frac{\langle v \wedge u, v \wedge u \rangle_{v}^{1/2}}{\sum_{u' \sim v} \langle v \wedge u', v \wedge u' \rangle_{v}^{1/2}}$$

The geometric Ricci curvature then measures how these probability distributions relate under optimal transport:
$$\text{Ric}_{\text{geom}}(v,w) = 1 - \frac{W_1(\mu_v, \mu_w)}{d_\rho(v,w)},$$
where $W_1$ denotes the Wasserstein-1 distance computed using our
density-weighted metric. The interpretation follows classical intuition:
positive values indicate that neighborhoods of $v$ and $w$ are more similar than
their distance would suggest, implying local geometric regularity. The mass from
$\mu_v$ can be transported to $\mu_w$ through paths shorter than expected,
suggesting convergence of geodesics. Negative values indicate geometric
divergence, where transporting mass requires longer paths than the direct
distance would suggest.

This construction naturally incorporates the full Riemannian structure of our
simplicial complex. The edge weights in the probability measures reflect
boundary areas and dihedral angles, while the density-weighted metric in the
Wasserstein distance accounts for data distribution. The result is a
geometrically and statistically aware measure of local curvature that respects
the rich structure of our partition complex.

\paragraph{Density Ricci Curvature}

The density component captures how data concentration varies along edges. In
smooth Riemannian geometry, Ricci curvature relates to the second derivative of
volume growth. Adapting this principle to our discrete setting, we measure the
convexity of density along an edge:
$$\text{Ric}_{\text{dens}}(v,w) = \frac{2}{\rho(e)} \left(\frac{\rho(v) + \rho(w)}{2} - \rho(e)\right)$$

The edge density $\rho(e)$ comes from our interpolation scheme developed in
Section 2.4.1, which incorporates geometric correction factors to respect the
Riemannian structure. When density is convex along the edge (the midpoint
density exceeds the average of endpoint densities), we obtain positive
curvature, suggesting smooth data distribution. Negative values indicate concave
density, potentially signaling a boundary cutting through a data cluster.

This measure could prove valuable for detecting problematic partition
boundaries. A decision boundary that slices through a dense cluster of similar
points will exhibit strongly negative density curvature, while boundaries that
respect natural data groupings show near-zero or positive values. This
diagnostic capability makes density curvature a powerful tool for understanding
model behavior.

\paragraph{Functional Ricci Curvature}

The functional component quantifies how model predictions transition between
cells. For the general case where models may have different functional forms in
adjacent cells, we need to measure the consistency of model behavior across
boundaries. We define different functional curvature measures to capture
different aspects of functional behavior across adjacent cells.

The mean curvature component measures consistency of local convexity:
$$\text{Ric}_{\text{func}}^{\text{mean}}(v,w) = 1 - \frac{|\kappa_f(v) - \kappa_f(w)|}{|\kappa_f(v)| + |\kappa_f(w)| + \epsilon}$$
where $\kappa_f$ is the discrete mean curvature from Section 2.4.2 and
$\epsilon > 0$ is a small regularization parameter that prevents division by
zero when both vertices have near-zero curvature. This captures whether the
function maintains consistent local behavior along the edge, with values close
to 1 indicating smooth transitions between regions of similar curvature and
negative values suggesting abrupt changes from convex to concave behavior.

The level set component examines how directional derivative variance changes:
$$\text{Ric}_{\text{func}}^{\text{level}}(v,w) = 1 - \frac{\sqrt{\kappa_f^{\text{level}}(v)} + \sqrt{\kappa_f^{\text{level}}(w)}}{2} \cdot \frac{|f(v) - f(w)|}{d_\rho(v,w) \cdot \|\nabla f\|_{\text{avg}}}$$
where
$\|\nabla f\|_{\text{avg}} = \frac{1}{|E|} \sum_{(u,u') \in E} \frac{|f(u) - f(u')|}{d_\rho(u,u')}$
represents the average gradient magnitude across all edges in the simplicial
complex. This normalization factor ensures the measure is scale-invariant and
combines local nonlinearity (through the level set curvatures) with the rate of
change along the edge relative to the global gradient behavior.

For models with continuous predictions across cells, we also include a direct
measure of gradient discontinuity:
$$\text{Ric}_{\text{func}}^{\text{direct}}(v,w) = 1 - \beta \frac{\int_{C_v \cap C_w} \|\nabla f_v - \nabla f_w\|^2 d\sigma}{\text{vol}_{n-1}(C_v \cap C_w) \cdot \|\nabla f\|_{\text{avg}}^2}$$
where $\beta > 0$ is a scaling parameter. Since the fraction measures the
squared gradient jump relative to the average squared gradient, it can range
from 0 (perfect continuity) to values exceeding 1 (opposing gradients). A
natural choice would be $\beta \in [0.25, 1]$ to ensure the functional Ricci
curvature remains appropriately bounded and comparable to the other components.
For piecewise constant models, this reduces to:
$$\text{Ric}_{\text{func}}^{\text{direct}}(v,w) = 1 - \gamma \frac{\|c_v - c_w\|^2}{\|c_v\|^2 + \|c_w\|^2}$$
where $\gamma > 0$ similarly provides scaling, with the choice of $\gamma$
depending on the desired sensitivity to prediction discontinuities across cell
boundaries.

The components above measure intrinsic properties of the function $f$. We can
also directly measure how well the model predictions align with the training
responses across the edge:
$$\text{Ric}_{\text{func}}^{\text{response}}(v,w) = 1 - \frac{\sum_{x_k \in C_v \cup C_w} |f(x_k) - y_k|^2}{\sum_{x_k \in C_v \cup C_w} |y_k - \bar{y}|^2}$$
where $\bar{y} = \frac{1}{|C_v \cup C_w|} \sum_{x_k \in C_v \cup C_w} y_k$ is
the mean response over the data points in the two cells. This measures
prediction accuracy relative to the variance in the data, with values near 1
indicating good local fit.

The functional Ricci curvature can be defined as a weighted combination:
$$\text{Ric}_{\text{func}}(v,w) = \alpha_1 \text{Ric}_{\text{func}}^{\text{mean}}(v,w) + \alpha_2 \text{Ric}_{\text{func}}^{\text{level}}(v,w) + \alpha_3 \text{Ric}_{\text{func}}^{\text{direct}}(v,w) + \alpha_4 \text{Ric}_{\text{func}}^{\text{response}}(v,w)$$
where the weights $\alpha_i$ represent a design choice in the regularization
framework. Different weight configurations emphasize different aspects: setting
only $\alpha_4 = 1$ focuses on prediction accuracy, while including other
components incorporates smoothness and consistency properties.

\paragraph{Regularization Through Geometric Penalties}

With the statistical Ricci curvature fully developed, we can now complete the regularization framework:
$$R(f) = \sum_{v \in V} \phi(\kappa^{\text{stat}}_f(v)) + \lambda \sum_{(v,w) \in E} \psi(\text{Ric}_{\text{stat}}(v,w))$$
The vertex term penalizes extreme local curvatures while allowing moderate
values that may represent legitimate data structure. The edge term penalizes
poor transitions between cells, encouraging models that respect both geometric
and statistical structure. The balance parameter $\lambda$ controls the relative
importance of local versus relational regularity.

The choice of penalty functions $\phi$ and $\psi$ depends on the specific
application and desired properties. For robust regularization that allows some
geometric variation while preventing extremes, one might choose:
$$\phi(x) = \begin{cases} x^2 & \text{if } |x| \leq \tau \\ 2\tau|x| - \tau^2 & \text{if } |x| > \tau \end{cases}$$
This Huber-like penalty transitions from quadratic to linear, preventing single
high-curvature vertices from dominating the regularization. For edges, a simpler
quadratic penalty $\psi(x) = x^{2}$ may suffice, though the relative stability and
informativeness of edge curvatures requires empirical validation.

\paragraph{Computational Considerations}

Computing statistical Ricci curvature requires several geometric quantities that
can be efficiently maintained during model training. The Wasserstein-1 distance
for geometric curvature can be computed using linear programming for small
vertex degrees or approximated using entropic regularization for larger
problems. The density interpolation requires solving the linear system from
Section 2.4.1 once per training epoch or when the partition structure changes
significantly.

For neural networks, the partition structure changes discretely when activation
patterns shift. Rather than recomputing all curvatures from scratch, we can
maintain incremental updates for affected edges. When a hyperplane shifts
slightly without changing the combinatorial structure, curvatures can be updated
through sensitivity analysis. This incremental approach makes geometric
regularization practical even for large models.

\paragraph{Higher-Order Terms}
The Taylor-like expansion in principle accommodates higher-order simplices
beyond edges. For triangles ($p=2$) and higher-dimensional simplices, one could
define analogous curvature measures that capture multi-way interactions among
cells. The diagnostic value and practical benefit of these higher-order terms
for machine learning applications remains an open question. We conjecture that
for many applications, the combination of vertex and edge terms may be
sufficient to capture the essential geometric structure.

\paragraph{Geometric Monitoring During Training}

The statistical curvature framework enables practical tools for monitoring and
improving model training. As parameters evolve during optimization, they induce
changes in the partition structure and corresponding geometric quantities:
$$\theta(t) \xrightarrow{\text{determines}} \mathcal{P}(t) \xrightarrow{\text{induces}} K(t) \xrightarrow{\text{equipped with}} \left(\langle \cdot, \cdot \rangle(t), \rho(t), \{f_\alpha(t)\}\right)$$
By tracking these geometric measures, we may gain insights complementary to
traditional loss-based monitoring.

A comprehensive geometric diagnostic tracks the distribution of curvatures
across the model:
$$\mathcal{C}(t) = \{\kappa^{\text{stat}}_f(v,t) : v \in V(t)\} \cup \{\text{Ric}_{\text{stat}}(v,w,t) : (v,w) \in E(t)\}$$
We hypothesize that early in training, highly variable curvatures with many
extreme values would be observed. As training progresses, the distribution might
stabilize if the model is learning appropriate geometric structure. A sudden
increase in negative curvatures could potentially provide an early warning
signal of overfitting.

The geometric energy provides a scalar summary of overall geometric
irregularity:
$$E(t) = \sum_{v \in V(t)} (\kappa^{\text{stat}}_f(v,t))^2 \cdot \text{vol}_n(C_v) + \lambda \sum_{(v,w) \in E(t)} (\text{Ric}_{\text{stat}}(v,w,t))^2 \cdot \text{vol}_{n-1}(F_{vw})$$
Monitoring this energy might reveal phases of learning: rapid decrease during
initial structure discovery, stabilization during fitting, and potential
increase if overfitting begins. Beyond passive monitoring, these geometric
insights could actively guide optimization.

For adaptive optimization, we propose adjusting learning rates based on how
strongly each parameter affects the geometric structure. If changing parameter
$\theta_i$ causes large changes in curvature values throughout the model, this
parameter might benefit from a smaller learning rate to avoid destabilizing the
geometry. Specifically, we could set:
$$\eta_i(t) = \eta_{\text{base}} \cdot g\left(\sum_{v \in V} \left|\frac{\partial \kappa^{\text{stat}}_f(v,t)}{\partial \theta_i}\right| + \sum_{(v,w) \in E} \left|\frac{\partial \text{Ric}_{\text{stat}}(v,w,t)}{\partial \theta_i}\right|\right)$$
where the sum measures the total sensitivity of all curvatures to parameter
$\theta_i$. The function $g$ is decreasing (for example, $g(x) = 1/(1+x)$), so
parameters with high geometric sensitivity get smaller learning rates. This
could help prevent training instabilities caused by rapid geometric changes.

For ensemble methods like gradient boosting, the geometric perspective could
reveal how models sequentially refine partitions. Each new weak learner modifies
the combined partition, with co-occurrence frequencies evolving as:
$$K^{(m+1)}(C_i, C_j) = \frac{m \cdot K^{(m)}(C_i, C_j) + \mathbf{1}[\text{adjacent in tree } m+1]}{m+1}$$

Monitoring whether these frequencies converge (potentially indicating geometric
consensus) or diverge (possibly suggesting inconsistent structures) might
provide insight into ensemble behavior beyond prediction accuracy.

These geometric monitoring tools could complement traditional metrics,
potentially revealing when models transition from learning robust patterns to
memorizing noise, even before validation performance degrades.

\subsubsection*{2.4.4 Open Questions and Future Directions}

While the geometric monitoring tools developed above offer immediate practical
applications, they also point toward deeper questions about the nature of
learning in machine learning models.

The relationship between geometry and learning extends beyond monitoring to
fundamental questions about architecture and regularization. Different
architectures create distinct initial geometric structures that constrain how
partitions can evolve during training. Understanding these constraints might
inform architecture design, considering not only expressiveness but also
geometric properties that affect learning dynamics.

Existing regularization techniques like dropout, weight decay, and batch
normalization demonstrably improve generalization, but their geometric effects
remain unclear. These methods might implicitly modify the evolution of partition
geometry to avoid problematic configurations. Alternatively, one could consider
regularization that explicitly targets geometric pathologies identified through
curvature analysis.

The relationship between statistical Ricci curvature and generalization bounds
presents another open question. Classical learning theory relies on complexity
measures such as VC dimension \cite{vapnik1971uniform, vapnik1998statistical}
and Rademacher complexity \cite{bartlett2002rademacher, mohri2012foundations}.
Models with bounded statistical Ricci curvature throughout training might
achieve different generalization properties, offering an alternative perspective
to classical measures.

These questions invite investigation from both theoretical and empirical
perspectives. Validating geometric insights requires careful experimentation
across diverse architectures and tasks. The connections between discrete
geometry and machine learning suggested here remain largely conjectural,
awaiting systematic exploration.

\section*{3. The Geometry of Feed-forward Networks}

In the previous section, we saw how partition models naturally induce Riemannian
simplicial complexes. Given data points $x_1, x_2, \ldots, x_N \in \mathbb{R}^n$
and responses $y_1, y_2, \ldots, y_N \in \mathbb{R}$, the geometry of the
partition $\mathcal{P}$ of a partition model is encapsulated by a nerve
simplicial complex $K(\mathcal{P})$ equipped with a Riemannian structure
$\langle , \rangle_{\mathcal{P}}$. Can neural networks be understood through
this same geometric lens?

Feed-forward networks represent the simplest architecture in deep learning.
While their expressive power has been well-studied \cite{poole2016exponential,
  bianchini2014complexity}, in this section we show that they also give rise to
rich geometric structures of functionally enriched Riemannian simplicial
complexes: geometric objects that encode both the combinatorial structure of the
network's partitions and the affine transformations it applies within each
region. Moreover, while tree-based models create a single partition, neural
networks generate a sequence of increasingly refined partitions through their
layers.

Consider the fundamental challenge: a neural network transforms data through
multiple layers, each creating its own geometric structure. How do these
structures relate to each other, and can we track geometric information as it
flows through the network? To answer this question, we begin with the simplest
building block—individual neurons—and progressively build our understanding of
how layers compose these geometric structures.

A single perceptron provides our starting point. Given a bounded domain
$\mathcal{D} \subset \mathbb{R}^n$, a perceptron is a mapping
$$
    \rho: \mathcal{D} \rightarrow \mathbb{R}
$$
defined by
$$
\rho(\mathbf{x}) = \sigma(\mathbf{w}^T\mathbf{x} + b),
$$
where $\mathbf{w} \in \mathbb{R}^n$ is the weight vector, $b \in \mathbb{R}$ is
the bias, and $\sigma$ is an activation function. The hyperplane
$H = \{\mathbf{x} : \mathbf{w}^T\mathbf{x} + b = 0\}$ divides $\mathbb{R}^n$
into two half-spaces:
$$
H^+ = \{\mathbf{x} : \mathbf{w}^T\mathbf{x} + b > 0\},
$$
and
$$
H^- = \{\mathbf{x} : \mathbf{w}^T\mathbf{x} + b \leq 0\}.
$$

This division induces a partition $\mathcal{P} = \{C_1, C_2\}$ of our domain,
where $C_1 = H^+ \cap \mathcal{D}$ and $C_2 = H^- \cap \mathcal{D}$, and the set
of local prediction functions
$$
\mathcal{F} = \{f_1 = \rho|_{C_{1}}: C_1 \to \mathbb{R}, f_2 = \rho|_{C_{2}}: C_2 \to \mathbb{R}\}
$$
For the ReLU activation function, the neuron outputs zero on $C_2$ and is linear on $C_1$, creating a piecewise linear function. For smooth activations, the partition marks regions of different qualitative behavior. The nerve complex $\mathcal{N}(\mathcal{P})$ of this partition is elementary: two vertices $v_1, v_2$ corresponding to the two cells, connected by a single edge since the cells share the boundary $H \cap \mathcal{D}$. The Riemannian structure
$$
\langle ,  \rangle_{\mathcal{N}(\mathcal{P})} = \{\langle ,  \rangle_{v_{1}}, \langle ,  \rangle_{v_{2}}, \langle ,  \rangle_{v_{1}\wedge v_{2}}\}
$$
over $\mathcal{N}(\mathcal{P})$ has the structure
$$
\langle v_{i} , v_{i} \rangle_{v_{i}} = \text{vol}_{n}(C_{i}) \text{  for  } i = 1, 2.
$$
and
$$
\langle v_{1}\wedge v_{2} , v_{1}\wedge v_{2} \rangle_{v_{1}\wedge v_{2}} = \text{vol}_{n-1}(C_{1} \cap C_{2}) = \text{vol}_{n-1}(H \cap \mathcal{D})
$$
The triple
$$
(\mathcal{N}(\mathcal{P}), \langle ,  \rangle_{\mathcal{N}(\mathcal{P})}, \mathcal{F})
$$
is the enriched simplicial complex associated with the partition $\mathcal{P}$ and the local prediction functions $\mathcal{F}$.

When we combine $m$ perceptrons into a single layer, the geometric structure becomes richer. The mapping $\rho: \mathcal{D} \rightarrow \mathbb{R}^m$ defined by
$$
\rho(x) = (\sigma(w_1^T x + b_1), \ldots, \sigma(w_m^T x + b_m)),
$$
creates $m$ hyperplanes that partition $\mathcal{D}$ into cells \cite{stanley2004introduction}. Each cell corresponds to an activation pattern $\alpha \in \{0,1\}^m$, indicating which neurons are active. Formally, the cell associated with pattern $\alpha$ is
$$
C_\alpha = \bigcap_{j: \alpha_j = 1} H_j^+ \cap \bigcap_{j: \alpha_j = 0} H_j^- \cap \mathcal{D}.
$$

This construction reveals an interesting connection to existing partition
models. A layer of ReLU neurons can be viewed as an enrichment of Multivariate
Adaptive Regression Splines (MARS) \cite{friedman1991multivariate}, where the
axis-aligned hinge functions $(x^{(j)} - t)_+$ are replaced by arbitrarily
oriented hyperplanes with ReLU activations $(\mathbf{w}^T x + b)_+$. Similarly,
it generalizes oblique decision trees by replacing their discrete splits with
continuous activation functions.

To illustrate how multiple neurons create richer geometric structures, consider a single layer neural network of two neurons with two inputs
$$
\rho(x_1, x_2) = \begin{pmatrix} \sigma(2x_1 - x_2 - 0.5) \\ \sigma(-x_1 + 2x_2 - 0.5) \end{pmatrix}
$$
operating on $\mathcal{D} = [0,1]^2$. We present this example in detail to demonstrate both the partition structure and the induced Riemannian geometry.

The first neuron activates when $2x_1 - x_2 > 0.5$, creating the hyperplane
$$
H_1: 2x_1 - x_2 = 0.5.
$$
The second neuron activates when $-x_1 + 2x_2 > 0.5$, creating the hyperplane
$$
H_2: -x_1 + 2x_2 = 0.5.
$$
These two lines intersect at $(0.5, 0.5)$, creating a partition $\mathcal{P}$ of $\mathcal{D}$ with four cells:
\begin{itemize}[label=, topsep=-5pt, partopsep=0pt, parsep=0pt, itemsep=0pt, after=\vspace{0.2cm}]
  \item $C_{00} = \{(x_1, x_2) \in \mathcal{D} : 2x_1 - x_2 \leq 0.5 \text{ and } -x_1 + 2x_2 \leq 0.5\}$
  \item $C_{01} = \{(x_1, x_2) \in \mathcal{D} : 2x_1 - x_2 \leq 0.5 \text{ and } -x_1 + 2x_2 > 0.5\}$
  \item $C_{10} = \{(x_1, x_2) \in \mathcal{D} : 2x_1 - x_2 > 0.5 \text{ and } -x_1 + 2x_2 \leq 0.5\}$
  \item $C_{11} = \{(x_1, x_2) \in \mathcal{D} : 2x_1 - x_2 > 0.5 \text{ and } -x_1 + 2x_2 > 0.5\}$
\end{itemize}
corresponding to the activation patterns, where we use subscript $ij$ as shorthand for the pattern $(i,j)$.
\begin{center}
  \begin{narrow}{-0.3in}{0in}
    \begin{minipage}[t]{.48\linewidth}
      \includegraphics[scale=0.55]{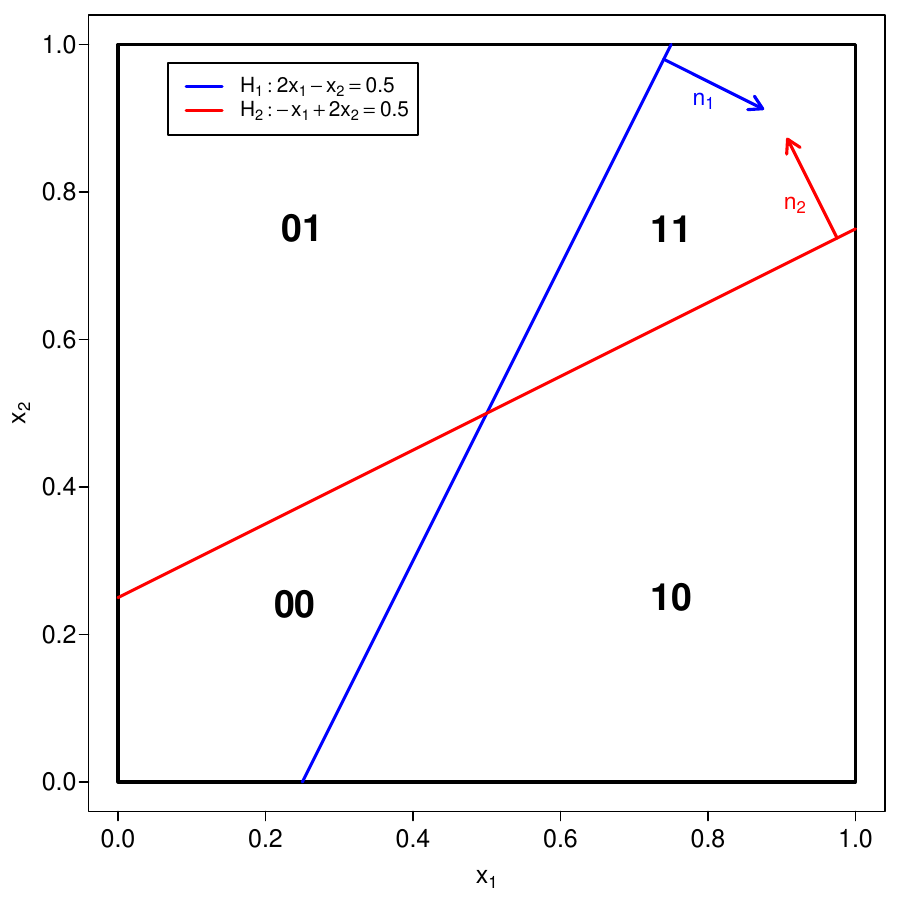}
    \end{minipage}\hfill\hspace{1cm}
    \begin{minipage}[t]{.48\linewidth}
      \includegraphics[scale=0.3]{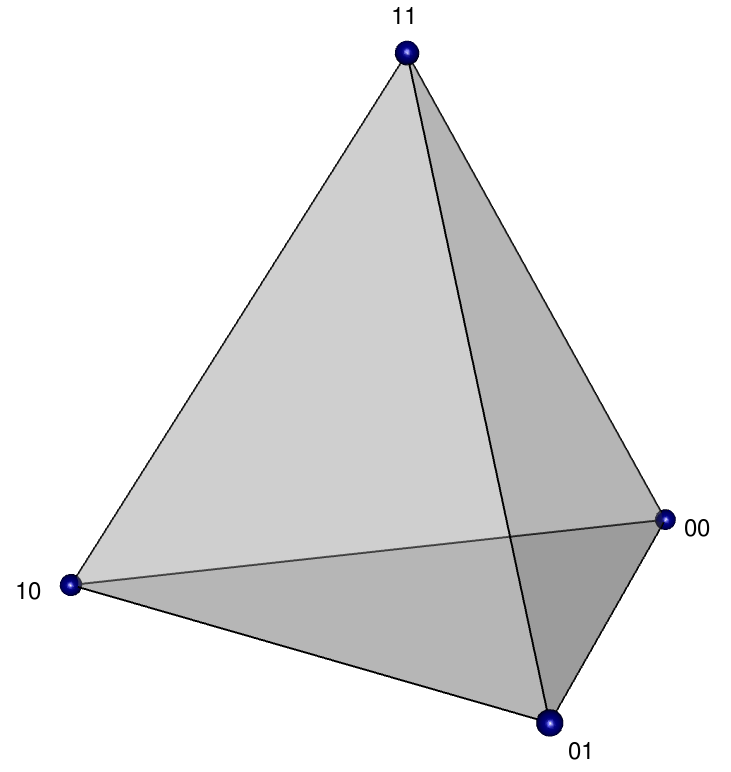}
    \end{minipage}
  \end{narrow}
  \captionof{figure}{Partition $\mathcal{P}$ induced by a two-neuron layer on
    the unit square. Left: The hyperplanes $H_1$ (blue) and $H_2$ (red) divide
    the domain into four cells $C_{00}$, $C_{01}$, $C_{10}$, and $C_{11}$
    corresponding to different activation patterns. Right: The nerve complex
    $\mathcal{N}(\mathcal{P})$ of this partition forms a 3-simplex simplicial
    complex since all four cells have a non-empty intersection.}
  \label{fig:two_neuron_partition}
\end{center}
The nerve complex $\mathcal{N}(\mathcal{P})$ of the partition $\mathcal{P}$ forms a 3-simplex simplicial complex since all four cells have a non-empty intersection. In particular, the simplicial complex $\mathcal{N}(\mathcal{P})$ has the vertices: $\{v_{00}, v_{01}, v_{10}, v_{11}\}$ corresponding to the cells $\{C_{00}, C_{01}, C_{10}, C_{11}\}$ and it has six edges
$$e_1 = v_{00} \wedge v_{01},\; e_2 = v_{00} \wedge v_{10},\; e_3 = v_{01} \wedge v_{11}$$
$$e_4 = v_{10} \wedge v_{11},\; e_5 = v_{00} \wedge v_{11},\; e_6 = v_{01} \wedge v_{10}$$
corresponding to faces
$$F_{1} = C_{00} \cap C_{01} =  H_1 \cap \{x_2 <= 0.5\}$$
$$F_{2} = C_{00} \cap C_{10} = H_{2} \cap \{x_1 <= 0.5\}$$
$$F_3   = C_{01} \cap C_{11} = H_2 \cap \{x_1 >= 0.5\}$$
$$F_4   = C_{10} \cap C_{11} = H_1 \cap \{x_2 >= 0.5\}$$
$$F_5   = C_{00} \cap C_{11} = \{(0.5, 0.5)\}$$
$$F_6   = C_{01} \cap C_{10} = \{(0.5, 0.5)\}.$$
Moreover, it has four 2D simplicies corresponding to the four triples of four
cells and it has one 3D simplex corresponding to the intersection
$\{(0.5, 0.5)\}$ of the four cells.

For each pair of edges with 1D face (thus for edges $e_{1}$ to $e_{4}$) the edge
inner product is
$$\langle e_i, e_j \rangle = \sqrt{\text{vol}_1(F_i) \cdot \text{vol}_1(F_j)} \cdot \cos\theta_{ij}$$
where $\text{vol}_1(F_i), \text{vol}_1(F_j)$ are the lengths of $F_{i}$ and $F_{j}$ faces and $\theta_{ij}$ is the interior dihedral angle between $F_i$ and $F_j$ within the corresponding cell. For example, the edges $e_1 = v_{00} \wedge v_{01}$ and $e_2 = v_{00} \wedge v_{10}$ correspond to faces $F_{1}$ and $F_{2}$ with the inner product
$$\langle e_1, e_2 \rangle_{v_{00}} = \sqrt{\text{vol}_1(F_1) \cdot \text{vol}_1(F_2)} \cdot \cos\theta_{12}$$
where
$$\cos \theta_{12} = \frac{\mathbf{n}_1 \cdot \mathbf{n}_2}{|\mathbf{n}_1||\mathbf{n}_2|} = \frac{4}{5}$$
with the normal vectors $\mathbf{n}_1 = (2, -1)$ and $\mathbf{n}_2 = (1, -2)$
chosen so that $\mathbf{n}_1$ faces away from the cell $C_{00}$ and
$\mathbf{n}_2$ faces into it. This non-zero off-diagonal term creates a
non-diagonal Gram matrix at vertex $v_{00}$, demonstrating that even a simple
two-neuron feed-forward network exhibits genuinely non-trivial Riemannian
structure.

This example illustrates how neural networks create geometric structures that
connect to classical partition-based methods in machine learning. The
hyperplanes divide the domain into regions with distinct computational
behaviors, while the nerve complex and its Riemannian structure capture both the
combinatorial relationships and the geometric configurations of these regions.

The true complexity emerges when we stack multiple layers. A feed-forward neural
network with $L$ layers defines a sequence of transformations:
$$\mathcal{D}_0 \xrightarrow{\rho_1} \mathcal{D}_1 \xrightarrow{\rho_2} \cdots \xrightarrow{\rho_{L-1}} \mathcal{D}_{L-1} \xrightarrow{\rho_L} \mathbb{R}^m,$$
where $\mathcal{D}_0 = \mathcal{D}$ is our initial domain and
$\mathcal{D}_\ell = \rho_\ell(\mathcal{D}_{\ell-1})$ for each subsequent layer.
Each domain $\mathcal{D}_\ell$ carries a partition $\mathcal{P}_\ell$ induced by
the hyperplanes of layer $\ell + 1$, giving rise to a sequence of nerve
complexes
$$
\mathcal{N}(\mathcal{P}_{0}), \mathcal{N}(\mathcal{P}_{1}), \ldots, \mathcal{N}(\mathcal{P}_{L-1}).
$$

Ideally, we would like each layer mapping $\rho_\ell$ to induce a well-defined
map between the nerve complexes that preserves their structure. Such a map
should send each vertex (cell) in the source complex to a vertex (cell) in the
target complex, and preserve adjacency relationships: if two cells are adjacent
in layer $\ell-1$, their images should be adjacent or coincide in layer $\ell$.
More generally, if a collection of cells forms a simplex (all cells have a
common intersection), their images should also form a simplex. Formally, a
simplicial map $f: K \to L$ between simplicial complexes is a function on
vertices $f: K_0 \to L_0$ such that whenever $\{v_0, v_1, ..., v_k\} \in K$
forms a $k$-simplex, the image vertices $\{f(v_0), f(v_1), ..., f(v_k)\}$ span a
simplex in $L$ (possibly of lower dimension if some vertices have the same
image).

However, the mapping
$\rho_\ell: \mathcal{D}_{\ell-1} \rightarrow \mathcal{D}_\ell$ does not
naturally induce such a simplicial map between the nerve complexes, because a
cell $C^{\ell-1}_\alpha$ from partition $\mathcal{P}_{\ell-1}$ maps under
$\rho_\ell$ to a set that may intersect multiple cells of partition
$\mathcal{P}_\ell$. There is no well-defined way to assign $C^{\ell-1}_\alpha$
to a single cell in the target partition.

To resolve this issue, we refine the partition $\mathcal{P}_{\ell-1}$ by
subdividing each cell according to how it maps into cells of $\mathcal{P}_\ell$.
For each pair of cells $(C^{\ell-1}_\alpha, C^\ell_\beta)$ such that
$\rho_\ell(C^{\ell-1}_\alpha) \cap C^\ell_\beta \neq \emptyset$, we define the
refined cell
$$Q^{\ell-1}_{\alpha,\beta} = C^{\ell-1}_\alpha \cap \rho_\ell^{-1}(C^\ell_\beta).$$
Since $\rho_\ell$ is piecewise affine, these refined cells are polyhedra that
partition the domain $\mathcal{D}_{\ell-1}$. The collection of all such refined
cells forms a partition $\mathcal{P}^{\rho_\ell}_{\ell-1}$ of
$\mathcal{D}_{\ell-1}$. By construction, each refined cell
$Q^{\ell-1}_{\alpha,\beta}$ maps under $\rho_\ell$ entirely into a single cell
$C^\ell_\beta$ of $\mathcal{P}_\ell$. This induces a simplicial map
$$
\hat{\rho}_\ell: K^{(\ell-1)}_{\rho_\ell} \to K^{(\ell)}
$$
sending each vertex corresponding to $Q^{\ell-1}_{\alpha,\beta}$ to the vertex
corresponding to $C^\ell_\beta$, where
$K^{(\ell-1)}_{\rho_\ell} = \mathcal{N}(\mathcal{P}^{\rho_\ell}_{\ell-1})$ and
$K^{(\ell)} = \mathcal{N}(\mathcal{P}_\ell)$ (see Appendix~B for the complete
construction and verification).

The refined simplicial complex $K^{(\ell-1)}_{\rho_\ell}$ has vertices
$v^{\ell-1}_{\alpha,\beta}$ corresponding to polytopes
$Q^{\ell-1}_{\alpha,\beta}$. The simplicial complex structure of
$K^{(\ell-1)}_{\rho_\ell}$ emerges naturally through the nerve of the the
partition $\mathcal{P}^{\rho_\ell}_{\ell-1}$ construction: vertices connect by
edges when their corresponding polytopes share a facet, triangles form when
three polytopes meet along an edge, and higher-dimensional simplices arise from
polytopes meeting at lower-dimensional faces. This construction captures the
full combinatorial complexity of how the refined partition cells fit together.

Crucially, this refinement preserves functional information. Over each refined
polytope $Q^{\ell-1}_{\alpha,\beta}$, the layer $\ell$ implements an affine map
$f^{\ell-1}_{\alpha,\beta}(x) = W^{\ell-1}_{\alpha,\beta} x + b^{\ell-1}_{\alpha,\beta}$,
where $W^{\ell-1}_{\alpha,\beta}$ is the weight matrix $W_\ell$ with rows
corresponding to inactive neurons (according to pattern $\alpha$) set to zero.
This functional enrichment is fundamental: each vertex in our simplicial complex
carries not just cell adjacency information but also the precise affine
transformation applied to data in that region.

Having established the existence of refined partitions and their associated
simplicial complexes, we now turn to the practical question of computing
their Riemannian structure.

This geometric structure can be systematically computed using differential forms
and pullback operations. For each cell $C^\ell_\beta$ in layer $\ell$, we
associate the $n$-form $\omega_\beta$ defined by integration over the cell. The
volume of the refined cell is then:
$$\text{vol}_n(Q^{\ell-1}_{\alpha,\beta}) = \left|\int_{C^{\ell-1}_\alpha} \rho_\ell^* \omega_\beta\right|$$
Since $\rho_\ell$ is affine on $C^{\ell-1}_\alpha$ with constant Jacobian matrix $W_\alpha$, this simplifies to:
$$\text{vol}_n(Q^{\ell-1}_{\alpha,\beta}) = |\det(W_\alpha)| \cdot \text{vol}_n(W_\alpha^{-1}(C^\ell_\beta - b_\alpha) \cap C^{\ell-1}_\alpha)$$

This pullback approach extends naturally to higher-dimensional chains. For edges
in the refined complex, boundary volumes are computed by pulling back the
appropriate $(n-1)$-forms, providing a systematic method to build the complete
Riemannian structure.

Given a finite dataset $X = \{x_1, \ldots, x_N\} \subset \mathcal{D}$ with
associated responses $y: X \to \mathbb{R}^m$, the feed-forward network learns an
approximation to the conditional expectation of $y$ given $X$. To properly track
geometric information through the network while maintaining simplicial maps, we
must work backwards from the final layer. This backward construction is
necessary because the composition $\rho_L \circ \cdots \circ \rho_1$ determines
which cells in the input space map to the same final output region. By starting
with the final partition and progressively refining earlier layers to respect
these identifications, we ensure that all intermediate simplicial maps are
well-defined.

This yields the sequence of simplicial complexes and maps:
$$K^{(0)}_{\rho_{L}, \ldots, \rho_1} \xrightarrow{\hat{\rho}_1} K^{(1)}_{\rho_{L}, \ldots, \rho_2} \xrightarrow{\hat{\rho}_2} \cdots \xrightarrow{\hat{\rho}_{L-1}} K^{(L-1)}_{\rho_L} \xrightarrow{\hat{\rho}_L} K^{(L)}$$
where each $K^{(\ell)}_{\rho_{L}, \ldots, \rho_{\ell+1}}$ is the nerve complex
of the data-containing cells in the partition of $\mathcal{D}_\ell$ induced by
the composition $\rho_L \circ \cdots \circ \rho_{\ell+1}$. Specifically, we only
include cells that contain the image under $\rho_\ell \circ \cdots \circ \rho_1$
of at least one data point from $X$. The crucial observation is that each
complex in this sequence is sufficiently refined to ensure that
$\hat{\rho}_\ell$ is a well-defined simplicial map.

The enriched Riemannian simplicial complex structure associated with the
feed-forward network consists of this entire sequence, not just the initial
complex. While the first element
$$\left(K^{(0)}_{\rho_{L}, \ldots, \rho_2, \rho_1}, \langle ,  \rangle_{K^{(0)}_{\rho_{L}, \ldots, \rho_2, \rho_1}}, \mathcal{F}^{(0)}_{\rho_{L}, \ldots, \rho_2, \rho_1}\right)$$
captures the final geometric structure induced on the data-supporting region of
the input domain, the complete characterization requires the entire sequence of
complexes together with the simplicial maps between them. Here,
$\langle , \rangle_{K^{(0)}_{\rho_{L}, \ldots, \rho_2, \rho_1}}$ is the
Riemannian structure derived from the data-containing cells of the partition of
$\mathcal{D}$, and $\mathcal{F}^{(0)}_{\rho_{L}, \ldots, \rho_2, \rho_1}$
consists of the restrictions of the full network function
$\rho_L \circ \cdots \circ \rho_1$ to each such cell.

This sequential perspective reveals the true richness of the neural network's
geometric structure. Each simplicial map $\hat{\rho}_\ell$ tracks how geometric
information propagates through layer $\ell$, while the sequence as a whole
captures how the network progressively refines its partition of the input space.
This allows us to define invariants that utilize the entire construction
process, not just the final result. For instance, we can track how the
dimensions of simplices evolve through the sequence, measure the distortion
introduced by each layer map, or analyze how the Riemannian structure changes as
information flows through the network.

The differential forms framework provides a systematic approach to computing the
Riemannian structure. For the 0-chain structure, each vertex $v_{\alpha,\beta}$
in the refined complex has inner product:
$$\langle v_{\alpha,\beta}, v_{\alpha,\beta} \rangle_0 = \text{vol}_n(Q^{\ell-1}_{\alpha,\beta})$$
computed via the pullback formula above. For 1-chains, consider edges
$e = v_{\alpha,\beta} \wedge v_{\alpha',\beta'}$. The boundary
$F = \overline{Q^{\ell-1}_{\alpha,\beta}} \cap \overline{Q^{\ell-1}_{\alpha',\beta'}}$
has $(n-1)$-dimensional volume that can be computed by pulling back the appropriate
boundary forms.

This approach extends naturally to compositions of layers. For a deep network
$\rho_L \circ \cdots \circ \rho_1$, we work backwards from the final partition,
computing volumes via composed pullbacks:
$$\text{vol}_n(Q^{(0)}) = \left|\int_{C^{(0)}} (\rho_1^* \circ \rho_2^* \circ \cdots \circ \rho_L^*) \omega_{\text{final}}\right|$$

The differential forms framework provides a path forward for computing this
Riemannian structure. Rather than viewing the refined cells as intractable
geometric objects, we can leverage the fact that volumes and boundary measures
transform systematically under pullback. For practical implementation, we need
only track cells containing training data and compute their volumes via composed
pullbacks. The geometric regularization methods developed in Section 2 can then
be applied using these computed volumes and boundary measures, potentially
leading to networks with more interpretable decision boundaries, better
generalization properties, and improved robustness.

This geometric perspective transforms neural networks from opaque function
approximators into mathematically precise geometric objects. The key insight is
that neural networks implement functionally enriched structures where each layer
creates both a simplicial complex encoding partition topology and a collection
of affine maps specifying local computations. The differential forms approach
makes this dual structure computationally accessible: the pullback operation
systematically tracks how volumes, angles, and other geometric quantities
transform through the network.

The framework extends naturally beyond ReLU networks. For smooth activation
functions like sigmoid or tanh, the layer maps $\rho_\ell$ are smooth rather
than piecewise affine. The pullback of forms remains well-defined:
$$
(\rho_\ell^* \omega)(x) = \omega(\rho_\ell(x)) \cdot \det(J_{\rho_\ell}(x)),
$$
where $J_{\rho_\ell}(x)$ is the Jacobian matrix at point $x$. While smooth
activations don't create discrete partitions, we can use data-induced partitions
(such as Voronoi cells from training points) and apply the pullback formalism to
compute induced volumes. The varying Jacobian creates a naturally curved
Riemannian geometry that captures how the network distorts space continuously
rather than through discrete cells. For attention mechanisms in Transformers,
the data-dependent partitions created by attention patterns fit naturally into
this framework, with attention weights providing additional geometric structure.

In practice, several strategies make implementation feasible. We need only track
cells containing training data, reducing complexity from potentially exponential
in the number of neurons to linear in the number of training examples. During
forward propagation, recording activation patterns identifies which refined
cells actually influence model behavior. For ReLU networks with affine maps
$\rho_\ell(x) = W_\alpha x + b_\alpha$ on each cell, the pullback formulas
provide explicit volume computations through the Jacobian determinant
$|\det(W_\alpha)|$.

The differential forms framework also suggests new theoretical questions
bridging abstract mathematics and practical machine learning. Can we
characterize which architectures maintain tractable geometric structures under
pullback? How does the condition number of Jacobian matrices relate to
generalization? What geometric invariants predict model robustness? These
questions open avenues for both theoretical advances and practical tools for
analyzing neural networks.

While challenges remain in scaling these computations to very large networks,
the systematic nature of the pullback formalism provides a principled path
forward. The intersection of discrete geometry with deep learning offers rich
opportunities for developing new regularization methods, interpretability tools,
and architectural insights grounded in rigorous mathematical foundations.

\section*{Appendix A: Derivation of the Whitney Extension Formula}
\addcontentsline{toc}{section}{Appendix A: Derivation of the Whitney Extension Formula}

We establish that both the Whitney extension and harmonic extension of a vertex
function to p-simplices yield the same formula, namely the barycentric average
weighted by simplex volume. Consider a p-simplex
$\sigma = [v_0, v_1, \ldots, v_p]$ in a Riemannian simplicial complex $K$. The
Riemannian structure determines edge lengths $\ell_{ij} = d(v_i, v_j)$ for all
vertex pairs, where the distance function $d$ is induced by the Riemannian
metric on the 1-skeleton of $K$. These edge lengths allow us to embed $\sigma$
isometrically into Euclidean space $\mathbb{R}^{p+1}$ using the Cayley-Menger
construction \cite{blumenthal1953theory, berger2009geometry}, establishing a
concrete geometric realization that preserves all distance relationships.

For the embedded simplex, we introduce barycentric coordinates
$(\lambda_0, \lambda_1, \ldots, \lambda_p)$ such that any point $x \in \sigma$
can be uniquely expressed as $x = \sum_{i=0}^p \lambda_i v_i$ where
$\lambda_i \geq 0$ and $\sum_{i=0}^p \lambda_i = 1$. These coordinates provide a
natural parametrization of the simplex that respects its affine structure
\cite{coxeter1969introduction}. The constraint $\sum_{i=0}^p \lambda_i = 1$
implies that only $p$ of these coordinates are independent; we may express
$\lambda_0 = 1 - \sum_{j=1}^p \lambda_j$ and use
$(\lambda_1, \ldots, \lambda_p)$ as free parameters ranging over the standard
p-simplex in parameter space.

The Whitney extension formula for lifting a vertex function
$u: V(K) \rightarrow \mathbb{R}$ to a p-cochain is given by
\cite{whitney1957geometric}
$$\kappa_p^{\text{Whitney}}(u)(\sigma) = \sum_{i=0}^p u(v_i) \int_\sigma \lambda_i \, d\lambda_0 \wedge \cdots \wedge \widehat{d\lambda_i} \wedge \cdots \wedge d\lambda_p$$
where the hat indicates omission of the corresponding differential. To evaluate
these integrals, we first observe that due to the linear constraint among
barycentric coordinates, we have $d\lambda_0 = -\sum_{j=1}^p d\lambda_j$. This
relationship allows us to express any differential form involving $d\lambda_0$
in terms of the independent differentials $d\lambda_1, \ldots, d\lambda_p$.

Consider the integral
$\int_\sigma \lambda_i \, d\lambda_0 \wedge \cdots \wedge \widehat{d\lambda_i} \wedge \cdots \wedge d\lambda_p$.
When $i = 0$, we have
$$\int_\sigma \lambda_0 \, d\lambda_1 \wedge \cdots \wedge d\lambda_p = \int_\sigma \left(1 - \sum_{j=1}^p \lambda_j\right) d\lambda_1 \wedge \cdots \wedge d\lambda_p$$

For $i \geq 1$, substituting $d\lambda_0 = -\sum_{j=1}^p d\lambda_j$ into the
differential form and using the antisymmetry of the wedge product, we find that
$$d\lambda_0 \wedge d\lambda_1 \wedge \cdots \wedge \widehat{d\lambda_i} \wedge \cdots \wedge d\lambda_p = (-1)^i d\lambda_1 \wedge \cdots \wedge d\lambda_p$$

Thus all integrals reduce to computing $\int_\sigma \lambda_i \, dV_\sigma$,
where $dV_\sigma$ represents the volume element on the p-dimensional simplex. By
the symmetry of the simplex under permutations of vertices, each barycentric
coordinate has the same integral value. Since $\sum_{i=0}^p \lambda_i = 1$
identically on $\sigma$, we have
$$(p+1) \int_\sigma \lambda_i \, dV_\sigma = \int_\sigma \sum_{i=0}^p \lambda_i \, dV_\sigma = \int_\sigma 1 \, dV_\sigma = \text{vol}_p(\sigma)$$

Therefore, $\int_\sigma \lambda_i \, dV_\sigma = \text{vol}_p(\sigma)/(p+1)$ for
each $i$. This classical result about the integral of barycentric coordinates
over a simplex can be found in \cite{lasserre1999integration} and follows from
the fact that the barycenter divides the simplex into $p+1$ equal-volume parts.
Thus we obtain
$$\kappa_p^{\text{Whitney}}(u)(\sigma) = \sum_{i=0}^p u(v_i) \cdot \frac{\text{vol}_p(\sigma)}{p+1} = \frac{\text{vol}_p(\sigma)}{p+1} \sum_{i=0}^p u(v_i)$$

For the harmonic extension approach, we seek a function
$H: \sigma \rightarrow \mathbb{R}$ that minimizes the Dirichlet energy
$\int_\sigma \|\nabla H\|^2 \, dV_\sigma$ subject to the boundary conditions
$H(v_i) = u(v_i)$ for all vertices. The Euler-Lagrange equation for this
variational problem is Laplace's equation $\Delta H = 0$ on the interior of
$\sigma$ \cite{evans2010partial}. Since $\sigma$ is a simplex, the solution is
the unique affine function interpolating the vertex values, namely
$H(x) = \sum_{i=0}^p \lambda_i(x) u(v_i)$. This can be verified directly: the
barycentric coordinates $\lambda_i$ are affine functions of the Cartesian
coordinates, hence their second derivatives vanish, giving
$\Delta H = \sum_{i=0}^p u(v_i) \Delta \lambda_i = 0$.

The p-cochain value is then obtained by integrating this harmonic function:
$$\kappa_p^{\text{harm}}(u)(\sigma) = \int_\sigma H \, dV_\sigma = \int_\sigma \sum_{i=0}^p \lambda_i u(v_i) \, dV_\sigma = \sum_{i=0}^p u(v_i) \int_\sigma \lambda_i \, dV_\sigma$$

Using our previous calculation, this yields
$$\kappa_p^{\text{harm}}(u)(\sigma) = \frac{\text{vol}_p(\sigma)}{p+1} \sum_{i=0}^p u(v_i)$$

Thus both the Whitney extension and harmonic extension produce the same lifted
values, demonstrating that the formula represents a fundamental geometric
property of simplicial interpolation. The volume $\text{vol}_p(\sigma)$
appearing in this formula is determined by the Riemannian structure through the
edge lengths. For a p-simplex with edge lengths
$\{\ell_{ij}\}_{0 \leq i < j \leq p}$, the volume can be computed using the
Cayley-Menger determinant \cite{berger2009geometry}:
$$\text{vol}_p(\sigma)^2 = \frac{(-1)^{p+1}}{2^p (p!)^2} \begin{vmatrix}
0 & 1 & 1 & \cdots & 1 \\
1 & 0 & \ell_{01}^2 & \cdots & \ell_{0p}^2 \\
1 & \ell_{01}^2 & 0 & \cdots & \ell_{1p}^2 \\
\vdots & \vdots & \vdots & \ddots & \vdots \\
1 & \ell_{0p}^2 & \ell_{1p}^2 & \cdots & 0
\end{vmatrix}$$

This completes the derivation, showing that the lifted value on any p-simplex
equals the volume-weighted barycentric average of the vertex values, with the
volume determined by the Riemannian structure of the simplicial complex.

\section*{Appendix B: Construction of Simplicial Maps Between Neural Network Layers}
\addcontentsline{toc}{section}{Appendix B: Construction of Simplicial Maps Between Neural Network Layers}

This appendix provides the rigorous construction of simplicial maps between
consecutive layers of a neural network, complementing the differential forms
framework introduced in Section 3. While Section 3 showed how to compute the
Riemannian structure via pullback operations, here we establish the foundational
result that well-defined simplicial maps exist between the nerve complexes of
consecutive layers.

Consider a feed-forward neural network with layer mappings
$\rho_\ell: \mathbb{R}^{n_{\ell-1}} \rightarrow \mathbb{R}^{n_\ell}$. As
established in Section 2.2, each ReLU layer induces a polyhedral partition of
its input space. The challenge is that when a cell from layer $\ell-1$ maps
forward under $\rho_\ell$, its image typically intersects multiple cells in
layer $\ell$'s partition, so $\rho_\ell$ does not directly induce a simplicial
map between the nerve complexes.

We resolve this through a refinement procedure. Given partitions $\mathcal{P}_{\ell-1} = \{C^{\ell-1}_\alpha\}$ and $\mathcal{P}_\ell = \{C^\ell_\beta\}$, we define the refined partition consisting of cells:
$$Q^{\ell-1}_{\alpha,\beta} = C^{\ell-1}_\alpha \cap \rho_\ell^{-1}(C^\ell_\beta)$$

\begin{lemma}
\label{lem:refined_partition}
The collection $\mathcal{Q}_{\ell-1} = \{Q^{\ell-1}_{\alpha,\beta} : Q^{\ell-1}_{\alpha,\beta} \neq \emptyset\}$ forms a partition of $\mathcal{D}_{\ell-1}$.
\end{lemma}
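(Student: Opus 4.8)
The plan is to check, directly from the definition of a partition in Section~2.1, that the nonempty sets $Q^{\ell-1}_{\alpha,\beta}$ cover $\mathcal{D}_{\ell-1}$ and that distinct members have disjoint interiors. First I would record the structural facts that do the work. Because $\mathcal{P}_{\ell-1}$ is the partition induced by the layer-$\ell$ hyperplanes, $\rho_\ell$ restricts to an affine map on each cell $C^{\ell-1}_\alpha$, so each $Q^{\ell-1}_{\alpha,\beta}=\{x\in C^{\ell-1}_\alpha:\rho_\ell(x)\in C^\ell_\beta\}$ is cut out of the polyhedron $C^{\ell-1}_\alpha$ by finitely many affine conditions, hence is itself polyhedral, and only finitely many are nonempty. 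Crucially, with the sign conventions used in Section~3 (strict inequality defining $H_j^+$, non-strict defining $H_j^-$), the cells of $\mathcal{P}_\ell$ are pairwise \emph{disjoint}, not merely interior-disjoint; I would flag this at the outset since it is what makes the second half of the argument clean.

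For the covering property I would take an arbitrary $x\in\mathcal{D}_{\ell-1}$, use the fact that $\mathcal{P}_{\ell-1}$ covers $\mathcal{D}_{\ell-1}$ to pick $\alpha$ with $x\in C^{\ell-1}_\alpha$, note $\rho_\ell(x)\in\rho_\ell(\mathcal{D}_{\ell-1})=\mathcal{D}_\ell$, and use the fact that $\mathcal{P}_\ell$ covers $\mathcal{D}_\ell$ to pick $\beta$ with $\rho_\ell(x)\in C^\ell_\beta$. Then $x\in C^{\ell-1}_\alpha\cap\rho_\ell^{-1}(C^\ell_\beta)=Q^{\ell-1}_{\alpha,\beta}$, which is therefore nonempty and hence a member of $\mathcal{Q}_{\ell-1}$; so $\bigcup\mathcal{Q}_{\ell-1}=\mathcal{D}_{\ell-1}$. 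This step uses only continuity of $\rho_\ell$, not affineness.

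For disjointness I would split on indices. If $(\alpha,\beta)\neq(\alpha',\beta')$ with $\alpha\neq\alpha'$, then $Q^{\ell-1}_{\alpha,\beta}\subseteq C^{\ell-1}_\alpha$ and $Q^{\ell-1}_{\alpha',\beta'}\subseteq C^{\ell-1}_{\alpha'}$; since $\mathrm{int}(Q^{\ell-1}_{\alpha,\beta})$ is an open set contained in $C^{\ell-1}_\alpha$ it lies in $\mathrm{int}(C^{\ell-1}_\alpha)$, and similarly on the other side, and these interiors are disjoint because $\mathcal{P}_{\ell-1}$ is a partition. If $\alpha=\alpha'$ but $\beta\neq\beta'$, then $Q^{\ell-1}_{\alpha,\beta}\subseteq\rho_\ell^{-1}(C^\ell_\beta)$ and $Q^{\ell-1}_{\alpha,\beta'}\subseteq\rho_\ell^{-1}(C^\ell_{\beta'})$, and $\rho_\ell^{-1}(C^\ell_\beta)\cap\rho_\ell^{-1}(C^\ell_{\beta'})=\rho_\ell^{-1}(C^\ell_\beta\cap C^\ell_{\beta'})=\rho_\ell^{-1}(\emptyset)=\emptyset$, so these two cells are in fact disjoint. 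Combining the cases gives interior-disjointness, and with the covering statement this proves the lemma.

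The step I expect to need the most care in the write-up is not a computation but a definitional one: pinning down exactly which notion of partition is in force. If one insisted on \emph{closed} polyhedral cells $\overline{C^\ell_\beta}$ sharing boundaries, the second sub-case above would fail in degenerate situations — a ReLU cell on which the active weight matrix is rank-deficient can be mapped by $\rho_\ell$ entirely onto a shared boundary of two cells of $\mathcal{P}_\ell$, forcing $Q^{\ell-1}_{\alpha,\beta}=Q^{\ell-1}_{\alpha,\beta'}$ as full-dimensional sets and breaking interior-disjointness. The honest resolution is to commit to the half-open cell convention already used in the single-perceptron discussion, so that $\mathcal{P}_\ell$ is a genuine set-partition; under that convention the pullback argument above goes through verbatim, and this choice should be stated explicitly rather than glossed over.
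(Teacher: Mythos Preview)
Your proof is correct and follows essentially the same approach as the paper: verify coverage by chasing an arbitrary point through the two partitions, and verify disjointness by forcing coincidence of the indices $(\alpha,\beta)$. Your write-up is in fact more careful than the paper's, which tacitly assumes full set-disjointness of the cells (implicitly via the half-open convention) without flagging it; your explicit discussion of this point and of the rank-deficient degeneracy is a genuine improvement in rigor.
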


\begin{proof}
We verify coverage and disjointness. For coverage, any $x \in \mathcal{D}_{\ell-1}$ belongs to some $C^{\ell-1}_\alpha$ and $\rho_\ell(x)$ belongs to some $C^\ell_\beta$, hence $x \in Q^{\ell-1}_{\alpha,\beta}$. For disjointness, if $x \in Q^{\ell-1}_{\alpha,\beta} \cap Q^{\ell-1}_{\alpha',\beta'}$, then $x \in C^{\ell-1}_\alpha \cap C^{\ell-1}_{\alpha'}$ implies $\alpha = \alpha'$, and $\rho_\ell(x) \in C^\ell_\beta \cap C^\ell_{\beta'}$ implies $\beta = \beta'$.
\end{proof}

Each refined cell remains a convex polyhedron since $\rho_\ell$ is affine on $C^{\ell-1}_\alpha$ with $\rho_\ell(x) = W_\alpha x + b_\alpha$. The volume of the refined cell, as shown in Section 3, is computed via the pullback formula:
$$\text{vol}_n(Q^{\ell-1}_{\alpha,\beta}) = |\det(W_\alpha)| \cdot \text{vol}_n(W_\alpha^{-1}(C^\ell_\beta - b_\alpha) \cap C^{\ell-1}_\alpha)$$

\begin{theorem}
\label{thm:simplicial_map}
The map $\hat{\rho}_\ell: K^{(\ell-1)}_{\text{ref}} \rightarrow K^{(\ell)}$ defined by $\hat{\rho}_\ell(v_{\alpha,\beta}) = v_\beta$ on vertices and extended linearly to simplices is a well-defined simplicial map.
\end{theorem}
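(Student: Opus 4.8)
The plan is to verify the two conditions defining a simplicial map: that the vertex assignment $v_{\alpha,\beta}\mapsto v_\beta$ is unambiguous, and that it carries every simplex of $K^{(\ell-1)}_{\text{ref}}$ to a simplex of $K^{(\ell)}$. Well-definedness of the vertex map is immediate from Lemma~\ref{lem:refined_partition}: if two nonempty refined cells satisfy $Q^{\ell-1}_{\alpha,\beta}=Q^{\ell-1}_{\alpha',\beta'}$, picking any $x$ in this set gives $x\in C^{\ell-1}_\alpha\cap C^{\ell-1}_{\alpha'}$, forcing $\alpha=\alpha'$, and $\rho_\ell(x)\in C^\ell_\beta\cap C^\ell_{\beta'}$, forcing $\beta=\beta'$; so the label $\beta$ attached to a vertex of the refined complex is determined by the vertex alone, and $\hat{\rho}_\ell$ is unambiguously defined on vertices.

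For the simplex-to-simplex condition, recall from the nerve construction (Section~2.2.1) that a finite set of distinct refined cells $\{Q^{\ell-1}_{\alpha_0,\beta_0},\dots,Q^{\ell-1}_{\alpha_k,\beta_k}\}$ spans a $k$-simplex of $K^{(\ell-1)}_{\text{ref}}$ precisely when $\bigcap_{i=0}^{k}\overline{Q^{\ell-1}_{\alpha_i,\beta_i}}\neq\emptyset$. I would fix a point $x$ in this intersection; the natural candidate for a common point of the image cells is $\rho_\ell(x)$. It then suffices to show $\rho_\ell(x)\in\overline{C^\ell_{\beta_i}}$ for each $i$, since this gives $\rho_\ell(x)\in\bigcap_i\overline{C^\ell_{\beta_i}}\neq\emptyset$, so the distinct cells among $C^\ell_{\beta_0},\dots,C^\ell_{\beta_k}$ span a simplex of $K^{(\ell)}$ of dimension $\le k$ (with repetitions collapsed) — exactly what the definition of a simplicial map recalled in Section~3 permits.

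The key step is an interchange of closure and image. Since $\rho_\ell$ is continuous on $\mathbb{R}^{n_{\ell-1}}$ (a composition of an affine map with the coordinatewise ReLU, hence globally continuous), the standard topological fact $\rho_\ell(\overline{S})\subseteq\overline{\rho_\ell(S)}$ applies with $S=Q^{\ell-1}_{\alpha_i,\beta_i}$. By the defining property of the refined partition, $\rho_\ell(Q^{\ell-1}_{\alpha_i,\beta_i})\subseteq C^\ell_{\beta_i}$, so $\rho_\ell(x)\in\rho_\ell(\overline{Q^{\ell-1}_{\alpha_i,\beta_i}})\subseteq\overline{\rho_\ell(Q^{\ell-1}_{\alpha_i,\beta_i})}\subseteq\overline{C^\ell_{\beta_i}}$. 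Running this over all $i$ completes the verification, and extending $\hat{\rho}_\ell$ linearly across each simplex yields the asserted simplicial map.

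I expect the only genuine subtlety to be this closure/image interchange, together with the observation that it needs nothing beyond continuity of $\rho_\ell$ — one should resist trying to argue by literally pushing cells forward, since the naive assignment $C^{\ell-1}_\alpha\mapsto$ ``the cell containing $\rho_\ell(C^{\ell-1}_\alpha)$'' is ill-defined, which is precisely the reason the refinement $Q^{\ell-1}_{\alpha,\beta}$ was introduced in the first place. A secondary point worth stating explicitly in the write-up is that the vertex map need not be injective: several refined cells sharing the same $\beta$ all collapse onto $v_\beta$, so the image of a simplex may have strictly smaller dimension, consistent with the definition of simplicial map. Beyond these remarks the argument is routine, and compatibility with the Riemannian and functional enrichment is handled separately via the pullback formulas of Section~3 and need not be reproved here.
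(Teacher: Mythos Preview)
Your proof is correct and follows essentially the same approach as the paper's: pick a point in the common intersection of the refined cells forming a simplex, push it forward under $\rho_\ell$, and conclude that the target cells have nonempty common intersection and hence span a simplex in $K^{(\ell)}$. Your version is in fact slightly more careful than the paper's, which omits the closures from the nerve construction and the continuity argument needed for the inclusion $\rho_\ell(\overline{S})\subseteq\overline{\rho_\ell(S)}$.
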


\begin{proof}
Let $\sigma = [v_{\alpha_0,\beta_0}, \ldots, v_{\alpha_k,\beta_k}]$ be a simplex in $K^{(\ell-1)}_{\text{ref}}$. By definition, $\bigcap_{i=0}^k Q^{\ell-1}_{\alpha_i,\beta_i} \neq \emptyset$. For any $x$ in this intersection, we have $\rho_\ell(x) \in \bigcap_{i=0}^k C^\ell_{\beta_i}$, which is non-empty. Therefore the vertices $\{v_{\beta_i}\}$ span a simplex in $K^{(\ell)}$. Face relations are preserved since faces of $\sigma$ map to faces of $\hat{\rho}_\ell(\sigma)$.
\end{proof}

The construction satisfies functorial properties: identity maps induce identity simplicial maps, and the diagram for composed layer maps commutes after appropriate refinements.

For deep networks, the backward construction from Section 3.1 ensures compatibility across all layers. Starting from the final partition induced by $\rho_L \circ \cdots \circ \rho_1$, we progressively refine earlier partitions to obtain the sequence:
$$K^{(0)}_{\rho_{L}, \ldots, \rho_1} \xrightarrow{\hat{\rho}_1} K^{(1)}_{\rho_{L}, \ldots, \rho_2} \xrightarrow{\hat{\rho}_2} \cdots \xrightarrow{\hat{\rho}_L} K^{(L)}$$

The differential forms framework from Section 3 shows how to compute the Riemannian structure on these complexes through systematic pullback calculations. The Jacobian matrices $W_\alpha$ that appear in the volume formula above are precisely the local linear approximations that determine how geometric quantities transform under the layer mappings.

This completes the rigorous foundation for analyzing neural networks as sequences of simplicial complexes with well-defined maps between them, while the computational aspects are handled through the differential forms machinery developed in the main text.

\bibliographystyle{plain}
\bibliography{no3_paper}

\end{document}